\newtheorem{definition}{Definition}
\newtheorem{theorem}{Theorem}
\DeclareMathAlphabet{\mathcal}{OMS}{cmsy}{m}{n}
\DeclarePairedDelimiter{\abs}{\lvert}{\rvert}
\DeclarePairedDelimiter{\norm}{\lVert}{\rVert}
\newtheorem{problem}{Problem}[section]
\titlespacing{\section}{0pt}{1ex}{0.8ex}
\titlespacing{\subsection}{0pt}{0.4ex}{0.4ex}
\titlespacing{\subsubsection}{0pt}{0.3ex}{0.3ex}
\newcommand{\R}{{\mathbb{R}}}
\newcommand{\argmin}{\textrm{arg}\min}
\newcommand{\zono}[1]{\langle #1 \rangle}
\newcommand{\setdef}[2][]{
	\left\{
	\ifblank{#1}{}{#1 \hspace{.1cm} \middle| \hspace{.1cm}}
	#2
	\right\}
}
\begin{document}

\title{\LARGE \bf Enhancing Data-Driven Reachability Analysis \\
using Temporal Logic Side Information}
\author{Amr Alanwar$^*$, Frank J. Jiang$^*$, Maryam Sharifi, Dimos V. Dimarogonas, and Karl H. Johansson

\thanks{$^*$ Authors are with equal contributions and ordered alphabetically.}
\thanks{This work is supported by the the Swedish Research
Council (VR), the Wallenberg AI, Autonomous Systems and Software Program (WASP) funded by the Knut and Alice Wallenberg Foundation, ERC CoG LEAFHOUND, and the Knut \& Alice Wallenberg Foundation (KAW).}
\thanks{
    All authors are with the Division of Decision and Control Systems, EECS, KTH Royal Institute of Technology, Malvinas v{\"a}g 10, 10044 Stockholm, Sweden. All authors are also affiliated with Digital Futures. {\tt\small \{alanwar, frankji, msharifi, dimos, kallej\}@kth.se}}}

\maketitle

\begin{abstract}
 
This paper presents algorithms for performing data-driven reachability analysis under temporal logic side information.
In certain scenarios, the data-driven reachable sets of a robot can be prohibitively conservative due to the inherent noise in the robot's historical measurement data. In the same scenarios, we often have side information about the robot's expected motion (e.g., limits on how much a robot can move in a one-time step) that could be useful for further specifying the reachability analysis. In this work, we show that if we can model this side information using a signal temporal logic (STL) fragment, we can constrain the data-driven reachability analysis and safely limit the conservatism of the computed reachable sets. Moreover, we provide formal guarantees that, even after incorporating side information, the computed reachable sets still properly over-approximate the robot's future states. Lastly, we empirically validate the practicality of the over-approximation by computing constrained, data-driven reachable sets for the Small-Vehicles-for-Autonomy (SVEA) hardware platform in two driving scenarios.
\end{abstract}
\section{Introduction}

Reachability analysis is an essential tool that provides a principled understanding of the dynamic capabilities of a system \cite{conf:reach2000,conf:reach1971}.
In recent years, researchers have proposed a variety of formulations in which reachability analysis  provides formal guarantees on the safety of an autonomous system (i.e., for autonomous vehicles~\cite{conf:thesisalthoff} and drones~\cite{Gillula2011}). Traditionally, 
a reachable set of states is computed based on a model of the subject system using either set-propagation techniques~\cite{conf:rigorousreachtaylor, conf:polydiscrete, conf:sparsepolyzono} or simulation-based techniques~\cite{conf:reachsim3, Lew2020, conf:reachsim2, conf:simreachmatrix}. Most techniques compute over-approximations of the robot's reachable states to ensure that the resulting reachable set can be used for providing safety guarantees. However, these traditional approaches are sensitive to model error and do not incorporate the readily available trajectory data that robots continuously produce.


Several recent works have proposed performing reachability analysis from data~\cite{Devonport2020, conf:onthefly, conf:murat_christoffel,conf:activelearning1,conf:activelearning2,Allen2014,conf:uncertain,conf:stochasticreach,conf:koopmanblack} to overcome the limitation of prior model knowledge. By performing reachability analysis directly from data, we can form a direct link between the actual, historical performance of a robot and our prediction of its reachability, removing the dependency on the accuracy of first-principles-based modeling. Moreover, in~\cite{conf:ourL4dc,conf:ourjournal}, authors provide formal guarantees on the over-approximation of a system's reachability based on data that contains noise. However, in order to provide guarantees on the over-approximation of the data-driven reachable sets, the computed sets might become prohibitively conservative when the noise becomes significant. In this work, we aim to limit this conservatism whenever we have useful side information. 


\begin{figure}[t]
    \centering
    \includegraphics[width=0.37\textwidth]{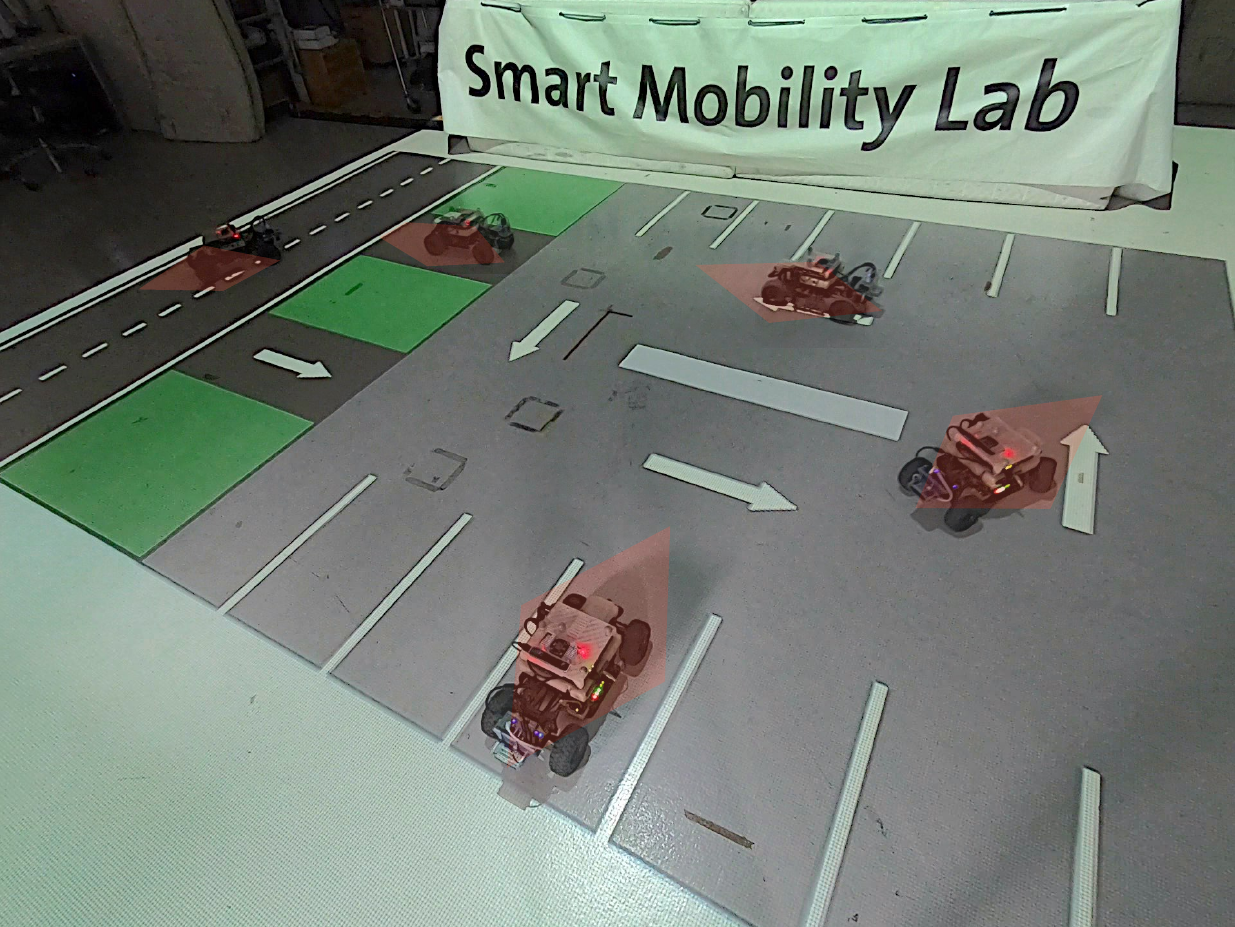}
    \caption{Snapshots from our \href{https://bit.ly/DataReachSTL}{experiment video} where an SVEA vehicle leaves a parking lot in the \href{https://www.kth.se/is/dcs/research/control-of-transport/smart-mobility-lab/smart-mobility-lab-1.441539}{Smart Mobility Lab} and its STL reachable sets are drawn in red at different time instances.} 
    \label{fig:exp}
     \vspace{-8mm}
\end{figure}

 The main contribution of this paper is an approach for performing data-driven reachability analysis under signal temporal logic (STL) side information. We choose to use STL since it can be interpreted over continuous-time signals, supports imposing strict deadlines and robust semantics \cite{fainekos2009robustness}, and allows for the formulation of complex specifications. 
To the extent of the authors' knowledge, the presented approach is novel in its use of STL formulae as side information instead of as specifications (e.g.~\cite{lindemann2020barrier, Jiang2020a}) while performing reachability analysis. 
More specifically, the contributions of this work are as follows:
(1) We provide two algorithms for performing data-driven reachability analysis under STL side information, which, in turn, reduces the conservatism of data-driven reachable sets.
(2) We provide state inclusion guarantees in reachable sets by intersecting a predicate function constructed from STL side information with either reachable zonotopes or reachable constrained zonotopes. 
(3) We validate our approach in two driving scenarios using the Small-Vehicles-for-Autonomy (SVEA) hardware platform (e.g., in Fig.~\ref{fig:exp}).

The remainder of the paper is organized as follows. In Section~\ref{sec:prelim}, we introduce preliminary material. 
In Section~\ref{sec:approach}, we present our approach to constrain the reachable sets using STL-based side information. In Section~\ref{sec:eval}, we validate the practicality of our approach using the SVEA platform. In Section~\ref{sec:con}, we conclude the paper with final remarks.

\section{Preliminaries and Problem Statement}\label{sec:prelim}
In this section, we start by describing our assumed model for the subject system. After establishing our assumed model, we overview some necessary preliminary material and end the section by detailing the problem that we solve in Section~\ref{sec:approach}.
\subsection{Model Description}
We consider a discrete-time Lipschitz nonlinear system
\begin{align}
    x(k+1) &= f(x(k),u(k))+ w(k). 
    \label{eq:sys}
\end{align}
We assume $f$ to be an unknown twice differentiable function and $w(k) \in \mathcal{Z}_w$ to be process noise bounded by the set $\mathcal{Z}_w$. 

\subsection{Reachable Set and Set Representations}
In the following definitions, we define the reachable sets and different set representations used in our approach. 
\begin{definition} \textbf{(Reachable set)}
The reachable set $\mathcal{R}_{N}$ after $N$ steps of system \eqref{eq:sys} from a set of initial states $\mathcal{X}_0 \subset \mathbb{R}^n$ and a set of possible inputs $\mathcal{U}_k \subset \mathbb{R}^m$ is
\begin{align}
    \begin{split} \label{eq:R}
        \mathcal{R}_{N} =& \big\{ x(N) \in \mathbb{R}^n \, \big| \forall k \in \{0,...,N-1\}:\nonumber\\
        &\, x(k+1) = f(x(k),u(k))+ w(k), w(k) \in \mathcal{Z}_w,   \nonumber\\
        &\,   u(k) \in \mathcal{U}_k,x(0) \in \mathcal{X}_0  \big\}.
    \end{split}
\end{align}
\end{definition}

\begin{definition} \label{df:zono} \textbf{(Zonotope \cite{conf:zono1998,Girard2005})}
Given center $c_{\mathcal{Z}} \in \mathbb{R}^n$ and $\gamma_{\mathcal{Z}} {\in} \mathbb{N}$ generator vectors in a generator matrix $G_{\mathcal{Z}}{=}\begin{bmatrix} g_{\mathcal{Z}}^{(1)}& {\dots} &g_{\mathcal{Z}}^{(\gamma_{\mathcal{Z}})}\end{bmatrix} {\in} \mathbb{R}^{n \times \gamma_{\mathcal{Z}}}$, a zonotope is defined as
\begin{equation*}
\mathcal{Z} = \Big\{ x \in \mathbb{R}^n \; \Big| \; x = c_{\mathcal{Z}} + G_{\mathcal{Z}} \beta_{\mathcal{Z}}, \, 	-1 \leq \beta_{\mathcal{Z}}^{(i)} \leq 1 \Big\} \, .
\end{equation*}
We use the shorthand notation $\mathcal{Z} = \zono{c_{\mathcal{Z}},G_{\mathcal{Z}}}$ for a zonotope. 
\end{definition}

The linear map is defined and computed as follows \cite{conf:diff-set}:
\begin{align}
L \mathcal{Z} = \{Lz | z\in\mathcal{Z}\}  = \zono{L c_{\mathcal{Z}}, L G_{\mathcal{Z}} }. \label{eq:linmap}
\end{align}
Given two zonotopes $\mathcal{Z}_1=\langle c_{\mathcal{Z}_1},G_{\mathcal{Z}_1} \rangle$ and $\mathcal{Z}_2=\langle c_{\mathcal{Z}_2},G_{\mathcal{Z}_2} \rangle$, the Minkowski sum $\mathcal{Z}_1 + \mathcal{Z}_2 = \{z_1 + z_2| z_1\in \mathcal{Z}_1, z_2 \in \mathcal{Z}_2 \}$ can be computed exactly as \cite{conf:diff-set}: 
\begin{equation}
     \mathcal{Z}_1 + \mathcal{Z}_2 = \Big\langle c_{\mathcal{Z}_1} + c_{\mathcal{Z}_2}, [G_{\mathcal{Z}_1}, G_{\mathcal{Z}_2} ]\Big\rangle.
     \label{eq:minkowski}
\end{equation}
%
The Cartesian product of two zonotopes $\mathcal{Z}_1 $ and $\mathcal{Z}_2$ is defined and computed as 
\begin{align}
\mathcal{Z}_1 \times \mathcal{Z}_2 &= \Bigg\{ \begin{bmatrix}z_1 \\ z_2\end{bmatrix} \Bigg| z_1 \in \mathcal{Z}_1, z_2 \in \mathcal{Z}_2 \Bigg\}\nonumber\\
&= \bigg\langle \begin{bmatrix} c_{\mathcal{Z}_1}  \\ c_{\mathcal{Z}_2}  \end{bmatrix}, \begin{bmatrix} G_{\mathcal{Z}_1}  & 0 \\ 0 & G_{\mathcal{Z}_2}  \end{bmatrix} \bigg\rangle.
\label{eq:cardprod}
\end{align}

The noise $w(k)$ is random but bounded by the zonotope $w(k) \in \mathcal{Z}_w=\zono{c_{\mathcal{Z}_w},G_{\mathcal{Z}_w}}$. With a minor abuse of notation, we write $\mathcal{Z} =  \text{zonotope}(\underline{l},\overline{l}) \subset \mathbb{R}^n$ to represent an interval vector $\mathcal{I}=[\underline{l},\overline{l}] \subset \mathbb{R}^n$ as a zonotope where the interval vector is defined element wise. Zonotopes have been extended in \cite{conf:const_zono} to represent polytopes by applying constraints on the factors multiplied with the generators.

\begin{definition}\label{df:contzono}
\textbf{(Constrained zonotope \cite{conf:const_zono})} An $n$-dimensional constrained zonotope is defined by
\begin{equation*}
    \mathcal{C} {=} \setdef[x\in\mathbb{R}^n]{x=c_{\mathcal{C}}+G_{\mathcal{C}} \beta_{\mathcal{C}}, A_{\mathcal{C}} \beta_{\mathcal{C}}=b_{\mathcal{C}}, -1 \leq \beta_{\mathcal{C}}^{(i)} \leq 1},
\end{equation*}
where $c_{\mathcal{C}} \in \R^n$ is the center, $G_{\mathcal{C}}$ $\in$ $\R^{n \times \gamma_{\mathcal{C}}}$ is the generator matrix and $A_{\mathcal{C}} \in $ $\R^{n_c \times \gamma_{\mathcal{C}}}$ and $b_{\mathcal{C}} \in \R^{n_c}$ constrain the factors $\beta_{\mathcal{C}}$. In short, we write $\mathcal{C}= \zono{c_{\mathcal{C}},G_{\mathcal{C}},A_{\mathcal{C}},b_{\mathcal{C}}}$.
\end{definition}


\begin{definition}\label{df:strip}
\textbf{(Strip \cite{conf:stripzono})} 
For given parameters $y_{i,k}\in \mathbb{R}^p, H_{i,k}\in \mathbb{R}^{p \times n}$ and $r_{i,k}\in \mathbb{R}^p$, the strip $\mathcal{S}_{i,k}$ of index $i$ is the set of all possible state values satisfying
\begin{equation}
    \mathcal{S}_{i,k} = \setdef[x]{\abs{ H_{i,k} x - y_{i,k}} \leq r_{i,k} }, \label{eq:strip}
\end{equation}
where $|\cdot|$ and $\leq$ are applied element wise. 
\end{definition}

\begin{definition}\label{df:nonlinstrip}
\textbf{(Nonlinear strip)} 
For given $h_{i,k}(x)\in \mathbb{R}^p$ and $r_{i,k}\in \mathbb{R}^p$ the nonlinear strip $\mathcal{N}_{i,k}$ of index $i$ is the set of all possible state values satisfying
\begin{equation}
    \mathcal{N}_{i,k} = \setdef[x]{\abs{ h_{i,k}(x)} \leq r_{i,k} }. \label{eq:nonlinstrip}
\end{equation}
\end{definition}

We denote the Moore-Penrose pseudoinverse by $\dagger$ and the Kronecker product by $\otimes$. We also the denote the $j^{\text{th}}$ column of a matrix $A$ by ${(A)}_{.,j}$. The Frobenius norm is denoted by by $\|.\|_F$. For simplicity, we consider dimension of $p=1$.

\subsection{Signal Temporal Logic}
STL is an expressive language that is able to model complex, time-varying side information. STL is based on predicates $\nu$ which are obtained by evaluation of a predicate function $\mathfrak{h}(x):\mathbb{R}^{n}\to\mathbb{R}$, where $\nu:=\top$ (True) if $\mathfrak{h}(x)\geq 0$ and $\nu:=\bot$ (False) if $\mathfrak{h}(x)< 0$ for $x\in\mathbb{R}^{n}$ \cite{maler2004monitoring}. 
In this paper, we consider side information that can be modeled with the following STL fragment:
	\begin{align}
&\bar\phi::=G_{\left[ {a,b} \right]}\phi|F_{\left[ {a,b} \right]}\phi|\phi'U_{\left[ {a,b} \right]}\phi''|\phi'\wedge\phi'',\label{2nd}
\end{align}
where $\phi,\phi', \phi''$ are STL formulas.
In addition, $U_{\left[ {a,b} \right]}$ is the until operator with $a\leq b < \infty$, and $F_{\left[ {a,b} \right]}\phi=\top U_{\left[ {a,b} \right]}\phi$ and $G_{\left[ {a,b} \right]}\phi=\neg F_{\left[ {a,b} \right]}\neg\phi$ are eventually and always operators, respectively. Let $(x,t)\models\bar\phi$ denote the satisfaction relation. A formula $\bar\phi$ is satisfiable if $\exists x:\mathbb{R}_{\geq 0}\to\mathbb{R}^n$ such that $(x,t)\models\bar\phi$.
STL semantics are defined formally as follows:
\begin{definition}\textbf{(STL semantics \cite{maler2004monitoring})} The STL semantics for a signal $x:\!\!~\mathbb{R}_{\geq 0}\to\mathbb{R}^n$ are recursively given by:
\begin{align*}
  &(x,t)\models\nu \;\;\;\;\;\;\;\;\;\;\;\;\;\Leftrightarrow \mathfrak{h}(x)\geq 0,\\
  &(x,t)\models\neg\phi\;\;\;\;\;\;\;\;\;\;\;\Leftrightarrow \neg((x,t)\models\phi),\\
  &(x,t)\models\phi'\wedge\phi''\;\;\;\;\Leftrightarrow (x,t)\models\phi'\wedge(x,t)\models\phi'',\\
    &(x,t)\models\phi' U_{\left[ {a,b} \right]}{\phi ''}\Leftrightarrow \exists t_1\in{\left[ {t+a,t+b} \right]}\; s.t. (x,t_1)\models\phi''\\&\;\;\;\;\;\;\;\;\;\;\;\;\;\;\;\;\;\;\;\;\;\;\;\;\;\;\;\;\;\;\;\;\;\wedge\forall t_2\in{\left[ {t,t_1} \right]}, (x,t_2)\models\phi',\\
    &(x,t)\models F_{\left[ {a,b} \right]}{\phi}\;\;\;\;\;\Leftrightarrow\exists t_1\in{\left[ {t+a,t+b} \right]}\; s.t. (x,t_1)\models\phi,\\
        &(x,t)\models G_{\left[ {a,b} \right]}{\phi}\;\;\;\;\;\Leftrightarrow\forall t_1\in{\left[ {t+a,t+b} \right]}\; s.t. (x,t_1)\models\phi. 
\end{align*}
\end{definition}
We omit the time to simplify the notation and write $x \models \phi$.

\subsection{Data-Driven Reachablibity Analysis}
In this section, we show how we compute data-driven reachable sets from recorded trajectories. 
Consider $K$ input-state data trajectories of length $T_j$, $j=1,\dots,K$, from system \eqref{eq:sys}, given by $\{u^{(j)}(k)\}_{k=0}^{T_j - 1}$, $\{x^{(j)}(k)\}_{k=0}^{T_j}$.
Denote the following matrices containing the set of all data sequence.
\begin{align*}
     X &{=}  \begin{bmatrix} x^{(1)}(0) \dots  x^{(1)}(T_1)  \dots  x^{(K)}(0) \dots  x^{(K)}(T_K)\end{bmatrix}, \\
     U_- &{=}  \begin{bmatrix} u^{(1)}(0)  \dots  u^{(1)}(T_1\!-\!1) \dots u^{(K)}(0)  \dots  u^{(K)}(T_K\!-\!1) \end{bmatrix}, \\
     X_+ &{=}  \begin{bmatrix} x^{(1)}(1)\dots  x^{(1)}(T_1) \dots  x^{(K)}(1)  \dots  x^{(K)}(T_K) \end{bmatrix}, \nonumber\\
     X_- &{=}  \begin{bmatrix} x^{(1)}(0) \dots  x^{(1)}(T_1\!-\!1)  \dots  x^{(K)}(0)  \dots  x^{(K)}(T_K\!-\!1) \end{bmatrix}.
 \end{align*}
The total number of data points is denoted by $T {=} \sum_{j=1}^{K} T_j$, and the set of all data by $D{=}\{U_-, X\}$. 

After collecting the data offline, we calculate an over-approximation of the reachable sets online using Algorithm \ref{alg:LipReachability} \cite{conf:ourjournal}.
We compute a least-squares model $\tilde{M}$ at a linearization point $(u^\star_k,x^\star_k)$ in line \ref{ln:alglipMtilde} where $\mathcal{M}_w=\zono{C_{\mathcal{M}_w},\tilde{G}_{\mathcal{M}_w}}$ is a the noise matrix zonotope \cite{conf:ourjournal} with center matrix $C_{\mathcal{M}_w}$ and a list of generator matrices $\tilde{G}_{\mathcal{M}_w}$. Then, we compute a zonotope that over-approximates the model mismatch and the nonlinearity terms in lines \ref{ln:algliplower} to \ref{ln:alglipZl}.
Given that the data have a limited covering radius, we compute a Lipschitz zonotope in line \ref{ln:alglipZeps} to provide guarantees.
Next, we perform the reachability recursion in line \ref{ln:alglipRhat} given the previously computed zonotopes. Note that the Lipschitz constant $L^\star$ and covering radius $\delta$ can be computed as proposed in \cite{conf:ourjournal,conf:montenbruckLipschitz,conf:novaraLipschitz}.

\begin{algorithm}[t]
  \caption{Reachability analysis for Lipschitz nonlinear system}
  \label{alg:LipReachability}
  \textbf{Input}: input-state trajectories $D = (U_-,X)$, initial set $\mathcal{X}_{0}$, process noise zonotope $\mathcal{Z}_w$ and matrix zonotope $\mathcal{M}_w=\zono{C_{\mathcal{M}_w},\tilde{G}_{\mathcal{M}_w}}$, Lipschitz constant $L^\star$, covering radius $\delta$, input zonotope $\mathcal{U}_k$, data-driven zonotope $\hat{\mathcal{Z}}_{k-1}$\\
  \textbf{Output}: data-driven zonotope $\hat{\mathcal{Z}}_{k}$ 
    \begin{algorithmic}[1]
    
        \State $\tilde{M} = (X_+ - C_{\mathcal{M}_w}) \begin{bmatrix} 
        1_{1 \times T}\\ X_{-}-1 \otimes x^\star_k \\ U_{-}-1 \otimes u^\star_k 
        \end{bmatrix}^\dagger$ \label{ln:alglipMtilde}
        
        \State $\overline{l}  =\max_j \Bigg( {(X_{+})}_{.,j} - \tilde{M} \begin{bmatrix}1\\
        {(X_{-})}_{.,j} -  x^*_k\\ {(U_{-})}_{.,j} - u_k^*\end{bmatrix} \Bigg)$ \label{ln:algliplower}
        
        \State $\underline{l}  =\min_j \Bigg( {(X_{+})}_{.,j} - \tilde{M} \begin{bmatrix}1\\ 
        {(X_{-})}_{.,j} -  x^*_k\\ {(U_{-})}_{.,j} - u_k^*\end{bmatrix} \Bigg)$ \label{ln:alglipupper}
        
        \State $\mathcal{Z}_{L} = \text{zonotope}(\underline{l},\overline{l}) - \mathcal{Z}_w$ \label{ln:alglipZl}
        
        \State $\mathcal{Z}_\epsilon = \zono{0,\textup{diag}(L^\star \delta,\dots,L^\star \delta)}$ \label{ln:alglipZeps}
        
        \State $\hat{\mathcal{Z}}_{k} = \tilde{M} (1 \times \hat{\mathcal{Z}}_{k-1}  \times \mathcal{U}_k ) +  \mathcal{Z}_L + \mathcal{Z}_\epsilon +  \mathcal{Z}_w$ \label{ln:alglipRhat}
    \end{algorithmic}
\end{algorithm}
\subsection{Problem Statement}
Now that we have introduced the necessary preliminaries, we can detail the problem that we aim to solve. 
\begin{problem}
Given the STL side information $\phi_{k}= \phi_{1,k}\wedge \dots \wedge  \phi_{n_{\phi,k},k}$ with $\phi_{i,k}$ of the form~\eqref{2nd}, $i = 1,\dots,n_{\phi,k}$, a historical data set $D=\{U_-, X\}$  collected from an unknown system model, noise zonotope $\mathcal{Z}_w$, and input zonotope $\mathcal{U}_k$, compute the STL reachable set $\bar{\mathcal{R}}_{N}$ at time step $k~=~N$ starting from initial zonotope $\mathcal{X}_0$ that properly over-approximates the set of states $\mathcal{R}_{\phi,N}$ where
\begin{align}
\mathcal{R}_{\phi,N} = \{& x(N) \in \mathbb{R}^n \big| \, \forall k \in \{0,...,N-1\}:  x(k{+}1) {\models}\phi_{k{+}1},\nonumber\\
& x(k+1) {=} f(x(k),u(k))+ w(k), w(k) \in \mathcal{Z}_w,\nonumber\\
&  u(k) \in \mathcal{U}_k, x(0) \in \mathcal{X}_0,x(0) {\models}\phi_{0}  \}.
\end{align}
\end{problem}
The reachable set $\bar{\mathcal{R}}_{N}$ can represented by a zonotope ${\bar{\mathcal{Z}}_{N} \supseteq \mathcal{R}_{\phi,N}}$ or a constrained zonotope $\bar{\mathcal{C}}_{N} \supseteq \mathcal{R}_{\phi,N}$ .

\section{Reachability Analysis Given STL Side Information}\label{sec:approach}




\setlength{\textfloatsep}{0.5cm}
\setlength{\floatsep}{0.5cm}
\begin{algorithm}[t]
  \caption{Reachability analysis under STL side information using zonotopes}
  \label{alg:STLzono}
  \textbf{Input}: data-driven zonotope $\hat{\mathcal{Z}}_{k}=\zono{\hat{c}_{k},\hat{G}_{k}}$, STL side information $\phi_{i,k}$, $\forall i=1,\dots,n_{\phi,k}$\\
    \textbf{Output}: STL zonotope $\bar{\mathcal{Z}}_{k}=\zono{\bar{c}_{k},\bar{G}_{k}}$ 
    \begin{algorithmic}[1]
        \State $\bar{c}_{k}=\hat{c}_{k}, \, \, \bar{G}_{k}=\hat{G}_{k}$        
        \For{$i=1,\dots,n_{\phi,k}$}
        \State Construct $\mathfrak{h}_{i,k}(x)$ from $\phi_{i,k}$\label{ln:construct}
        \State $\lambda^*_{i,k} = \argmin_{\lambda_{i,k}} \norm{\bar{G}_{k}}^2_F$\label{ln:optlambda}
        \If{$\mathfrak{h}_{i,k}(x)$ is linear} 
        \State\!\!\!\!\!\!// $\mathfrak{h}_{i,k}(x)=r_{i,k}-|H_{i,k}x-y_{i,k}|$
        \State\!\!\!\!\!\!$\bar{c}_{k}=  \bar{c}_{k} +  \lambda^*_{i,k}(y_{i,k} - H_{i,k} \bar{c}_{k})$ \label{ln:stripzono_c}
        \State\!\!\!\!\!\!$\bar{G}_{k} = \begin{bmatrix} (I -  \lambda^*_{i,k} H_{i,k} ) \bar{G}_{k} & \lambda^*_{i,k} r_{i,k} \end{bmatrix}$ \label{ln:stripzono_G}
       \ElsIf{$\mathfrak{h}_{i,k}(x)$ is nonlinear} 
        \State\!\!\!\!\!\!// $\mathfrak{h}_{i,k}(x)=r_{i,k}-|h_{i,k}(x)|$
        \State\!\!\!\!\!\!$\bar{c}_{k} {=} \bar{c}_{k} {-} \lambda^*_{i,k} \bigg(\!\!h_{i,k}(x^*_{i,k}) {+}  \frac{\partial h_{i,k}}{\partial x}|_{x^*_{i,k}} (\bar{c}_{k} {-} x^*_{i,k}) {+} c_{L,i,k}\!\! \bigg)$\label{ln:nonstripzono_c}
        \State\!\!\!\!\!\!$\bar{G}_{k}{=}\begin{bmatrix}(I-\! \lambda^*_{i,k}  \frac{\partial h_{i,k}}{\partial x}|_{x^*_{i,k}}) \bar{G}_{k}\!\! \!\!&\lambda^*_{i,k} r_{i,k}\!\!\!\!  &-\lambda^*_{i,k} G_{L,i,k} \end{bmatrix}$\label{ln:nonstripzono_G}
        \EndIf 
        \EndFor
    \end{algorithmic}
\end{algorithm}

In the previous section, we showed how to generate a data-driven reachable set from input-state data. In this section, we show how to incorporate STL formulas in data-driven reachability analysis. Algorithm \ref{alg:STLzono} summarizes our proposed approach using zonotopes. The input to the algorithm is the data-driven zonotope $\hat{\mathcal{Z}}_{k}$ from Algorithm \ref{alg:LipReachability} and STL side information $\phi_{i,k}$, $i=1,\dots,n_{\phi,k}$, of the form~\eqref{2nd}. 
In line \ref{ln:construct}, we construct a predicate function $\mathfrak{h}_{i,k}(x)$ from $\phi_{i,k}$, 
such that if $\mathfrak{h}_{i,k}(x)\geq 0$, then $x\models\phi_{i,k}$ \cite{fainekos2009robustness}. We consider first the linear case, where $\mathfrak{h}_{i,k}(x)$ is a linear formula with respect to $x$. In this case, we represent $\mathfrak{h}_{i,k}(x)$ by a linear strip $\mathcal{S}_{i,k}$ in \eqref{eq:strip} by having $\mathfrak{h}_{i,k}(x)=r_{i,k}-|H_{i,k}x-y_{i,k}|$. The intersection between the linear strip $\mathfrak{h}_{i,k}(x)\geq 0$ and the current zonotope is provided in lines \ref{ln:stripzono_c} and \ref{ln:stripzono_G}. Many scenarios contain nonlinearity in the side information in which we propose to represent the $\mathfrak{h}_{i,k}(x) $ as a nonlinear strip $\mathcal{N}_{i,k}$ and perform an intersection with the data-driven reachable set. More specifically, we consider nonlinear strips in \eqref{df:nonlinstrip} with $\mathfrak{h}_{i,k}(x)=r_{i,k}-|h_{i,k}(x)|$, where the intersection with the reachable zonotope is provided in lines \ref{ln:nonstripzono_c} and \ref{ln:nonstripzono_G}. The optimal parameter $\lambda_{i,k}$ is computed by $\lambda^*_{i,k} {=} \argmin_{\lambda_{i,k}} \norm{\bar{G}_{k}}^2_F$ in line \ref{ln:optlambda}. 





\begin{algorithm}[t]
  \caption{Reachability analysis under STL side information using constrained zonotopes}
  \label{alg:STLconzono}
  \textbf{Input}: data-driven constrained zonotope $\hat{\mathcal{C}}_{k}{=}\zono{\hat{c}_k,\hat{G}_k,\hat{A}_k$, $\hat{b}_k}$, STL side information $\phi_{i,k}$, $i{=}1,{\dots},n_{\phi,k}$\\
    \textbf{Output}: STL constrained zonotope $\bar{\mathcal{C}}_{k}{=}\zono{\bar{c}_k,\bar{G}_k,\bar{A}_k,\bar{b}_k}$ 
    \begin{algorithmic}[1]
        \State $\bar{c}_{k}=\hat{c}_{k}, \, \, \bar{G}_{k}=\hat{G}_{k}, \, \, \bar{A}_{k}=\hat{A}_{k}, \, \, \bar{b}_{k}=\hat{b}_{k}$ 
        \For{$i=1,\dots,n_{\phi,k}$}
        \State Construct $\mathfrak{h}_{i,k}(x)$ from $\phi_{i,k}$\label{ln:constructconzono}
        \If{$\mathfrak{h}_{i,k}(x)$ is linear}
        \State // $\mathfrak{h}_{i,k}(x)=r_{i,k}-|H_{i,k}x-y_{i,k}| $
        \State  $\bar{c}_k = \bar{c}_k, \, \, \bar{G}_k = \bar{G}_k$\label{ln:stripconzono_c}
        \State  $\bar{A}_k {=} \begin{bmatrix}\bar{A}_k &0 \\ H_{i,k} \bar{G}_k &-r_{i,k} \end{bmatrix}, \, \, \bar{b}_k {=} \begin{bmatrix}\bar{b}_k \\ y_{i,k} - H_{i,k} {c}^\prime_k\end{bmatrix}$\label{ln:stripconzono_A}
        \ElsIf{$\mathfrak{h}_{i,k}(x)$ is nonlinear} 
        \State // $\mathfrak{h}_{i,k}(x)=r_{i,k}-|h_{i,k}(x)| $
        \State $\bar{c}_{k} = \bar{c}_{k},\, \, \bar{G}_{k} = \bar{G}_{k}$\label{ln:nonstripconzono_c}
        \State Compute $\mathcal{Z}_{L,i,k}= \zono{c_{L,i,k},G_{L,i,k}}$ \cite[p.65]{conf:thesisalthoff}
  \State $\bar{A}_k = \begin{bmatrix}\bar{A}_k &0& 0 \\ 
  \frac{\partial h_{i,k}}{\partial x}|_{x^*_{i,k}} \bar{G}_{k} &-r_{i,k}& G_{L,i,k}   \end{bmatrix}$\label{ln:nonstripconzono_A}
  \State $\bar{b}_k = \begin{bmatrix}\bar{b}_k \\ -h_{i,k}(x^*_{i,k}) {-} \frac{\partial h_{i,k}}{\partial x}|_{x^*_{i,k}} ( \bar{c}_k {-} x^*_{i,k}) {-} c_{L,i,k} \end{bmatrix}$\label{ln:nonstripconzono_b}
        \EndIf 
        \EndFor
    \end{algorithmic}
\end{algorithm}

The reachable set $\hat{\mathcal{R}}_k$ can be represented by a zonotope $\hat{\mathcal{Z}}_k$ from Algorithm \ref{alg:LipReachability} or as a constrained zonotope $\hat{\mathcal{C}}_k$ \cite{conf:ourjournal}. Using constrained zonotopes allows for less conservative results, but come with extra computational cost. We propose Algorithm \ref{alg:STLconzono} to compute reachable sets under STL side information using constrained zonotope. Similar to Algorithm \ref{alg:STLzono}, we construct $\mathfrak{h}_{i,k}(x)$ from $\phi_{i,k}$ in line \ref{ln:constructconzono}. Then, we provide intersection between constrained zonotope and linear strip $\mathfrak{h}_{i,k}(x) \geq 0$ in lines \ref{ln:stripconzono_c} and \ref{ln:stripconzono_A}. In case of nonlinear $\mathfrak{h}_{i,k}(x)$, we provide the intersection in lines \ref{ln:nonstripconzono_c} to \ref{ln:nonstripconzono_b}. In both Algorithms \ref{alg:STLzono} and \ref{alg:STLconzono}, we guarantee state inclusion by providing an over-approximated intersection between the data-driven reachable set $\hat{\mathcal{R}}_k$ and the $\mathfrak{h}_{i,k}(x)\geq 0$. 
To guarantee state inclusion in the STL generated set in case of nonlinear $\mathfrak{h}_{i,k}(x)$, we linearize and over-approximate the infinite Taylor series by a first order Taylor series and its Lagrange remainder $\mathcal{Z}_{L,i,k}= \zono{c_{L,i,k},G_{L,i,k}}$ \cite[p.65]{conf:thesisalthoff}. The next theorems shows the provided guarantees.

\begin{theorem} \label{th:nonlinzonostrip}
Algorithm \ref{alg:STLzono} provides reachability analysis with state inclusion guarantees under STL side information, i.e., $\bar{\mathcal{Z}}_{k} \supseteq \mathcal{R}_{\phi,k}$. 
\end{theorem}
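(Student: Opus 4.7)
The plan is to prove $\bar{\mathcal{Z}}_k \supseteq \mathcal{R}_{\phi,k}$ by induction on the loop index $i$ in Algorithm \ref{alg:STLzono}. The base case follows directly from the guarantees established for Algorithm \ref{alg:LipReachability} in~\cite{conf:ourjournal}: at initialization $\bar{\mathcal{Z}}_k = \hat{\mathcal{Z}}_k \supseteq \mathcal{R}_k \supseteq \mathcal{R}_{\phi,k}$, where the second inclusion holds because adding the STL satisfaction constraint to the definition of $\mathcal{R}_k$ can only shrink the set. So initially every $x \in \mathcal{R}_{\phi,k}$ sits inside $\bar{\mathcal{Z}}_k$.

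For the inductive step, fix an arbitrary $x \in \mathcal{R}_{\phi,k}$ and assume it lies in the current $\bar{\mathcal{Z}}_k$ at the start of iteration $i$. Because $x \models \phi_{i,k}$ and $\mathfrak{h}_{i,k}$ is constructed in line \ref{ln:construct} so that $\mathfrak{h}_{i,k}(x) \geq 0$ implies $x \models \phi_{i,k}$ (this is the standard robustness-based encoding from~\cite{fainekos2009robustness}, and the converse direction is all we need here: satisfaction of $\phi_{i,k}$ forces membership in the sub-level set of $-\mathfrak{h}_{i,k}$), we have $\mathfrak{h}_{i,k}(x) \geq 0$. In the linear branch this is exactly $|H_{i,k}x - y_{i,k}| \leq r_{i,k}$, so $x \in \mathcal{S}_{i,k}$, and in the nonlinear branch it is $|h_{i,k}(x)| \leq r_{i,k}$, so $x \in \mathcal{N}_{i,k}$. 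Hence the task reduces to showing that the update in lines \ref{ln:stripzono_c}--\ref{ln:stripzono_G} (resp.\ \ref{ln:nonstripzono_c}--\ref{ln:nonstripzono_G}) produces a zonotope that over-approximates $\bar{\mathcal{Z}}_k \cap \mathcal{S}_{i,k}$ (resp.\ $\bar{\mathcal{Z}}_k \cap \mathcal{N}_{i,k}$) for any choice of the free parameter $\lambda_{i,k}$.

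For the linear branch this over-approximation property is the classical zonotope--strip intersection identity from~\cite{conf:stripzono}: given $z \in \bar{\mathcal{Z}}_k$ with generating factors $\beta$ and $|H_{i,k}z - y_{i,k}| \leq r_{i,k}$, one can exhibit new factors $\bar{\beta}$ with $\|\bar{\beta}\|_\infty \leq 1$ such that $z = \bar{c}_k + \bar{G}_k \bar{\beta}$ using the updated center/generator in lines \ref{ln:stripzono_c}--\ref{ln:stripzono_G}; I will invoke this rather than re-derive it. For the nonlinear branch the plan is to reduce to the linear case: expanding $h_{i,k}$ by a first-order Taylor series around $x^*_{i,k}$ with Lagrange remainder bounded by the zonotope $\mathcal{Z}_{L,i,k} = \langle c_{L,i,k}, G_{L,i,k}\rangle$ as in~\cite[p.~65]{conf:thesisalthoff} yields
\begin{equation*}
h_{i,k}(x) \in h_{i,k}(x^*_{i,k}) + \tfrac{\partial h_{i,k}}{\partial x}\big|_{x^*_{i,k}}(x - x^*_{i,k}) + \mathcal{Z}_{L,i,k}
\end{equation*}
for every $x \in \bar{\mathcal{Z}}_k$. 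Combining this inclusion with $|h_{i,k}(x)| \leq r_{i,k}$ yields a linear strip in the augmented variable $(\beta, \beta_L)$, so that any such $x$ admits a representation $x = \bar{c}_k + \bar{G}_k \bar{\beta}$ with $\|\bar{\beta}\|_\infty \leq 1$ and the updated $\bar{c}_k, \bar{G}_k$ of lines \ref{ln:nonstripzono_c}--\ref{ln:nonstripzono_G}; this yields $x \in $ new $\bar{\mathcal{Z}}_k$. Chaining the base case with iterations $i = 1, \dots, n_{\phi,k}$ completes the induction.

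The main obstacle is the nonlinear branch: one must verify that the extra generator column $-\lambda_{i,k} G_{L,i,k}$ in line \ref{ln:nonstripzono_G} (together with the shift by $c_{L,i,k}$ in line \ref{ln:nonstripzono_c}) is exactly what is needed to absorb the Lagrange remainder while still giving a valid zonotope representation with factors in $[-1,1]$. The routine but delicate bookkeeping is in matching the factor $\lambda_{i,k}$ consistently across the linearized strip and the remainder zonotope, which is where the first-order Taylor enclosure from~\cite{conf:thesisalthoff} must be applied uniformly over $x \in \bar{\mathcal{Z}}_k$ rather than pointwise.
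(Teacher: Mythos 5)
Your proposal is correct and follows essentially the same route as the paper's proof: the linear branch is delegated to the strip--zonotope intersection result of \cite{conf:stripzono}, and the nonlinear branch is handled by a first-order Taylor expansion of $h_{i,k}$ about $x^*_{i,k}$ with the Lagrange remainder enclosed by $\mathcal{Z}_{L,i,k}$, showing that any $x$ in the intersection admits a representation $\bar{c}_k + \bar{G}_k z_b$ with $z_b \in [-1,1]$. Your explicit induction over the loop index and your remark that the over-approximation only needs the direction ``$x \models \phi_{i,k} \Rightarrow \mathfrak{h}_{i,k}(x) \geq 0$'' make the argument slightly more carefully scaffolded than the paper's, but the substance is identical.
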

\begin{proof}
%
%
In order to prove state inclusion guarantees, we show that the resultant intersection $\bar{\mathcal{Z}}_{k}$ between $\mathcal{N}_{i,k}: \mathfrak{h}_{i,k}(x) \geq 0$ and the data-driven zonotope $\hat{\mathcal{Z}}_{k}=\zono{\hat{c}_{k},\hat{G}_{k}}$ contains the state in all cases. We omit the proof in the linear case as it follows immediately from \cite[Prop.1]{conf:stripzono}. We prove the guaranteed intersection in the nonlinear case as follows: We aim to find the zonotope that over-approximates the intersection. Let $x {\in} (\hat{\mathcal{Z}}_{k} {\cap} \mathcal{N}_{i,k} )$, then there is a $ z_k \in [-1_{\gamma_{\hat{\mathcal{Z}}} \times 1},1_{\gamma_{\hat{\mathcal{Z}}} \times 1}]$, where
\begin{align}
    x = \hat{c}_{k} + \hat{G}_{k} z_k. \label{equ:x_zono}
\end{align}
Adding and subtracting $ \lambda_{i,k} \frac{\partial h_{i,k}}{\partial x}|_{x^*_{i,k}} \hat{G}_{k} z_k$ to \eqref{equ:x_zono} results in 
 \begin{align}
    x {=} \hat{c}_{k} {+} \lambda_{i,k}  \frac{\partial h_{i,k}}{\partial x}|_{x^*_{i,k}} \hat{G}_{k} z_k {+} \bigg( I {-}  \lambda_{i,k}  \frac{\partial h_{i,k}}{\partial x}|_{x^*_{i,k}}\bigg) \hat{G}_{k} z_k.\label{equ:x_zono2}
 \end{align}
 Given that $x \in (\hat{\mathcal{Z}}_{k} \cap \mathcal{N}_{i,k} )$, then $x \in \mathcal{N}_{i,k}$, i.e., there exists $d \in [-1_{p \times 1},1_{p \times 1}]$ for $\mathcal{N}_{i,k}$ such that:
 \begin{align}
 h_{i,k}(x^*_{i,k}) &+  \frac{\partial h_{i,k}}{\partial x}|_{x^*_{i,k}} (x - x^*_{i,k}) + \dots  =  r_{i,k}d. 
 \label{equ:bj1}
 \end{align}
 Inserting \eqref{equ:x_zono} into \eqref{equ:bj1}
 \begin{align*}
 -  h_{i,k}(x^*_{i,k}) -  \frac{\partial h_{i,k}}{\partial x}|_{x^*_{i,k}} (\hat{c}_{k} - x^*_{i,k}) - \dots & +  r_{i,k}d =\nonumber\\
 &\frac{\partial h_{i,k}}{\partial x}|_{x^*_{i,k}} \hat{G}_{k} z_k   
 \end{align*}
 Considering the Lagrange remainder $\mathcal{Z}_{L,i,k}{=} \zono{c_{L,i,k},G_{L,i,k}}$ \cite[p.65]{conf:thesisalthoff} results in
 \begin{align}
   \frac{\partial h_{i,k}}{\partial x}|_{x^*_{i,k}} \hat{G}_{k} z_k  \in & - h_{i,k}(x^*_{i,k}) {-}  \frac{\partial h_{i,k}}{\partial x}|_{x^*_{i,k}} (\hat{c}_{k} {-} x^*_{i,k}) \nonumber \\  
 & {-} \mathcal{Z}_{L,i,k} {+}  r_{i,k}d. 
   \label{equ:x_bj}
 \end{align}
Inserting \eqref{equ:x_bj} in \eqref{equ:x_zono2} results in 
\begin{align*}
    x\in& \hat{c}_{k} {+}  \lambda_{i,k} \bigg( {-} h_{i,k}(x^*_{i,k}) {-}  \frac{\partial h_{i,k}}{\partial x}|_{x^*_{i,k}} (\hat{c}_{k} - x^*_{i,k}) - \mathcal{Z}_{L,i,k} \nonumber\\
    &{+}  r_{i,k}d\bigg)  + \bigg(I -  \lambda_{i,k}  \frac{\partial h_{i,k}}{\partial x}|_{x^*_{i,k}}\bigg) \hat{G}_{k} z_k \nonumber\\
{=}& \underbrace{\hat{c}_{k} {-} \lambda_{i,k} \bigg( h_{i,k}(x^*_{i,k})  {+} \frac{\partial h_{i,k}}{\partial x}|_{x^*_{i,k}} (\hat{c}_{k} {-} x^*_{i,k}) {+} c_{L,i,k} \bigg)}_{\bar{c}_{k}} \nonumber\\
& {+} \underbrace{\begin{bmatrix}(I{-} \lambda_{i,k}  \frac{\partial h_{i,k}}{\partial x}|_{x^*_{i,k}}) \hat{G}_{k}&\!\! \lambda_{i,k} r_{i,k}&\!\!-\lambda_{i,k} G_{L,i,k} \end{bmatrix}}_{\bar{G}_{k}}\!\!\underbrace{\begin{bmatrix} z_k \\ d \\ z_L 
\end{bmatrix}}_{z_b} 
\end{align*}
Note that $z_b \in [-1_{\gamma_{\bar{\mathcal{Z}}} \times 1},1_{\gamma_{\bar{\mathcal{Z}}} \times 1}]$ as $d \in [-1_{p \times 1},1_{p \times 1}],z_k \in [-1_{\gamma_{\hat{\mathcal{Z}}} \times 1},1_{\gamma_{\hat{\mathcal{Z}}} \times 1}]$, and $z_L \in [-1_{\gamma_{\mathcal{Z}_L} \times 1},1_{\gamma_{\mathcal{Z}_L} \times 1}]$. Thus, the center and the generator of the over-approximating zonotope are $\bar{c}_{k}$ and $\bar{G}_{k}$, respectively. 
\end{proof}



 \begin{theorem}
 \label{th:nonlinconzonostrip}
 Algorithm \ref{alg:STLconzono} provides reachability analysis with state inclusion guarantees under STL side information, i.e., $\bar{\mathcal{C}}_{k} \supseteq \mathcal{R}_{\phi,k}$.  
%
 \end{theorem}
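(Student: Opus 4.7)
The plan is to mirror the proof of Theorem~\ref{th:nonlinzonostrip}, adapting the argument from zonotopes to constrained zonotopes. As in the previous theorem, the linear case follows by the standard constrained-zonotope/strip intersection construction (c.f.~\cite{conf:const_zono}), which is exact rather than over-approximate. The substantive part is the nonlinear case, where I need to show that any $x \in (\hat{\mathcal{C}}_k \cap \mathcal{N}_{i,k})$ is contained in the constrained zonotope $\bar{\mathcal{C}}_k$ whose parameters $(\bar{c}_k, \bar{G}_k, \bar{A}_k, \bar{b}_k)$ are produced by lines \ref{ln:nonstripconzono_c}--\ref{ln:nonstripconzono_b}; the theorem then follows by iterating over $i=1,\dots,n_{\phi,k}$ and applying the induction already baked into Algorithm~\ref{alg:LipReachability}.

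Concretely, I would pick an arbitrary $x \in (\hat{\mathcal{C}}_k \cap \mathcal{N}_{i,k})$ and, using Definition~\ref{df:contzono}, write $x = \hat{c}_k + \hat{G}_k \beta_k$ with $\hat{A}_k \beta_k = \hat{b}_k$ and $\|\beta_k\|_\infty \leq 1$. Since $x \in \mathcal{N}_{i,k}$, there exists $d \in [-1,1]$ with $h_{i,k}(x) = r_{i,k} d$. I would then expand $h_{i,k}(x)$ by a first-order Taylor series around $x^*_{i,k}$, absorbing the tail into the Lagrange remainder zonotope $\mathcal{Z}_{L,i,k} = \zono{c_{L,i,k}, G_{L,i,k}}$ so that there exists $z_L \in [-1,1]$ with
\begin{equation*}
h_{i,k}(x^*_{i,k}) + \tfrac{\partial h_{i,k}}{\partial x}\big|_{x^*_{i,k}}(x-x^*_{i,k}) + c_{L,i,k} + G_{L,i,k} z_L = r_{i,k} d.
\end{equation*}
Substituting $x = \hat{c}_k + \hat{G}_k \beta_k$ and rearranging isolates a linear equation in $(\beta_k,d,z_L)$ whose left-hand side is precisely the bottom block-row of $\bar{A}_k$ in line~\ref{ln:nonstripconzono_A} and whose right-hand side matches the bottom entry of $\bar{b}_k$ in line~\ref{ln:nonstripconzono_b}.

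To close the argument I would form the augmented factor vector $\beta'_k := [\beta_k^\top,\, d,\, z_L^\top]^\top$, which satisfies $\|\beta'_k\|_\infty \leq 1$ by construction. Since the center and generator matrix are left intact (with the implicit zero columns corresponding to the new $d$ and $z_L$ factors), $x = \bar{c}_k + \bar{G}_k \beta_k = \bar{c}_k + [\bar{G}_k,\,0,\,0]\,\beta'_k$; the first constraint block $\hat{A}_k \beta_k = \hat{b}_k$ is preserved by padding with zero columns, and the second block holds by the Taylor identity above. Hence $x \in \bar{\mathcal{C}}_k$, giving $\bar{\mathcal{C}}_k \supseteq \hat{\mathcal{C}}_k \cap \mathcal{N}_{i,k}$.

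The main obstacle will be bookkeeping on the constraint and factor dimensions: unlike Algorithm~\ref{alg:STLzono}, where the slack variable $d$ and the Lagrange remainder generators are appended to $\bar{G}_k$, Algorithm~\ref{alg:STLconzono} encodes them only through the augmented $\bar{A}_k$. I must therefore carefully justify that the new factors coming from $d$ and $z_L$ still lie in the unit box and that the combined linear system $\bar{A}_k \beta'_k = \bar{b}_k$ is satisfied by the constructed $\beta'_k$; the over-approximation is then induced solely by the Lagrange remainder, exactly as in Theorem~\ref{th:nonlinzonostrip}.
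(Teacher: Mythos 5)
Your proposal is correct and follows essentially the same route as the paper's proof: write $x=\hat{c}_k+\hat{G}_k z_k$ with $\hat{A}_k z_k=\hat{b}_k$, introduce $d\in[-1,1]$ from the nonlinear strip, Taylor-expand $h_{i,k}$ about $x^*_{i,k}$ with the Lagrange remainder zonotope, and stack the resulting linear relation as a new block row of $\bar{A}_k,\bar{b}_k$ over the augmented factor $[z_k^\top, d, z_L^\top]^\top$. Your explicit zero-padding of $\bar{G}_k$ (and of $\hat{A}_k$) to match the enlarged factor vector is a bookkeeping detail the paper leaves implicit but is exactly what its construction requires.
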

 \begin{proof}
Similar to the proof of Theorem \ref{th:nonlinzonostrip}, we omit the proof for the linear case as it follows immediately from \cite[Prop.1]{conf:const_zono} and prove the guaranteed intersection in the nonlinear case as follows: Let $x \in (\hat{\mathcal{C}}_k \cap \mathcal{N}_{i,k})$, then there is a $z_k\in\left[-1_{\gamma_{\hat{\mathcal{C}}} \times 1},1_{\gamma_{\hat{\mathcal{C}}} \times 1}\right]$ such that
\begin{align}
   x &= \hat{c}_k + \hat{G}_k z_k, \label{equ:x_zono3}\\
    \hat{A}_k z_k &= \hat{b}_k. \label{equ:z_zono}
\end{align}
%
 Given that $x$ is inside the intersection of the constrained zonotope $\hat{\mathcal{C}}_k$ and $\mathcal{N}_{i,k}$, there exists a $d \in [-1_{p \times 1},1_{p \times 1}]$ such that
 \begin{equation}
  h_{i,k}(x^*_{i,k}) + \frac{\partial h_{i,k}}{\partial x}|_{x^*_{i,k}}\big(x- x^*_{i,k}\big) + \dots  =  r_{i,k}\ d. 
 \label{equ:bj}
 \end{equation}
Inserting \eqref{equ:x_zono3} into \eqref{equ:bj} results in 
 \begin{equation}
  h_{i,k}(x^*_{i,k}) + \frac{\partial h_{i,k}}{\partial x}|_{x^*_{i,k}}\big(\hat{c}_k + \hat{G}_k z_k- x^*_{i,k}\big) + \dots  =  r_{i,k}\ d. 
   \label{equ:z_bj}
 \end{equation}
 We combine \eqref{equ:z_bj} and \eqref{equ:z_zono} while considering the Lagrange remainder yielding
 \begin{align*}
 \underbrace{\begin{bmatrix}\hat{A}_k &0& 0 \\ 
  \frac{\partial h_{i,k}}{\partial x}|_{x^*_{i,k}} \hat{G}_{k} &-r_{i,k}& G_{L,i,k}   \end{bmatrix}}_{\bar{A}_k} &\underbrace{\begin{bmatrix} z_k \\ d \\ z_L \end{bmatrix}}_{z_b} =\nonumber\\ 
 &\!\!\!\!\!\!\!\!\!\!\!\!\!\!\!\!\!\!\!\!\!\!\!\!\!\!\!\!\!\!\!\!\!\!\!\!\!\!\!\!\!\!\!\!\!\!\! \underbrace{\begin{bmatrix}\hat{b}_k \\ -h_{i,k}(x^*_{i,k}) - \frac{\partial h_{i,k}}{\partial x}|_{x^*_{i,k}} ( \hat{c}_k - x^*_{i,k}) - c_{L,i,k} \end{bmatrix}}_{\bar{b}_k}.
 \end{align*}
 Note that we consider the superset consisting the equality~\eqref{equ:z_bj} by solving it for all $d \in [-1_{p \times 1},1_{p \times 1}]$. Then, we can assure that~\eqref{equ:z_bj} is also satisfied.
\end{proof}

In the next section, we empirically show that the reachable sets computed from these intersections is a practical improvement compared to original data-driven reachable sets.



\section{Evaluation}\label{sec:eval}

\begin{figure}[t]
    \includegraphics[width=0.48\textwidth]{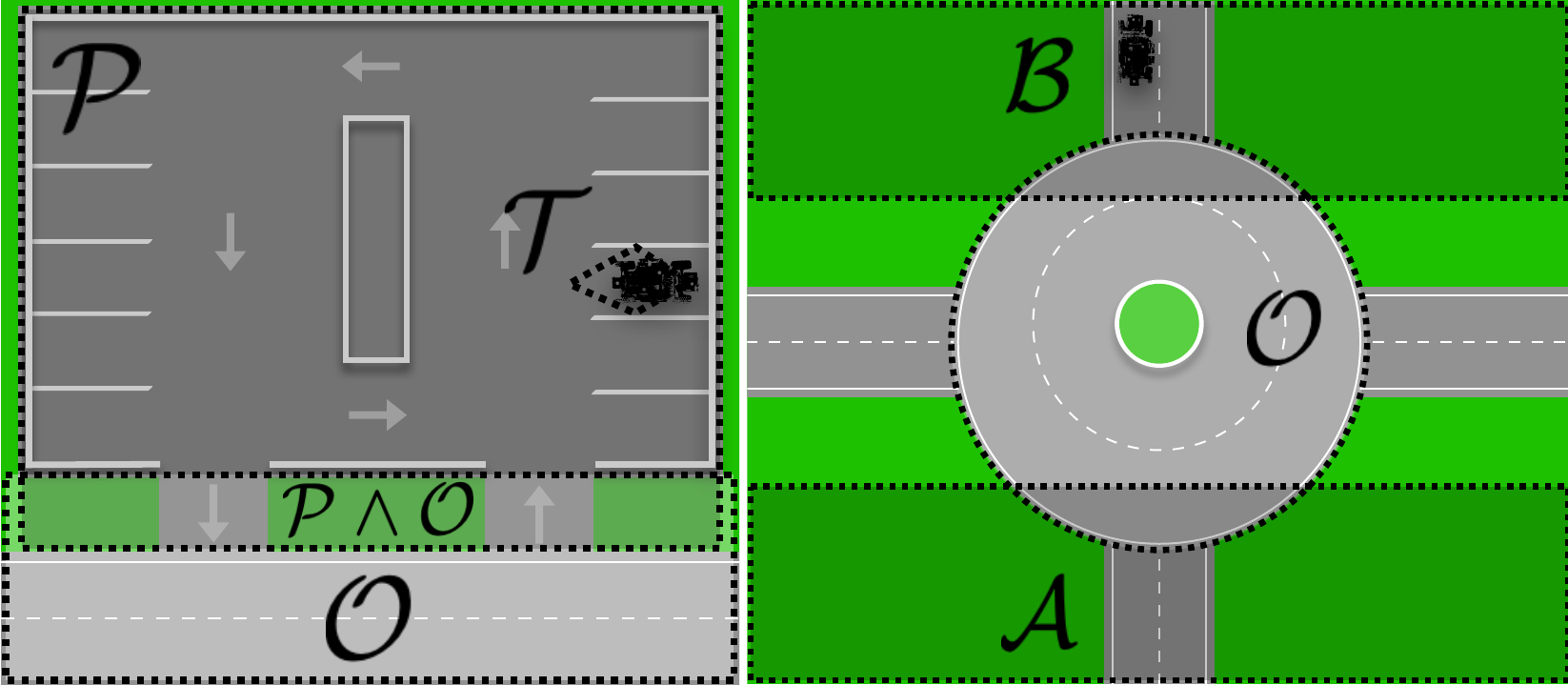}
    \caption{Illustration of our parking lot example on the left and roundabout example on the right.}
    \label{fig:parkround}
    \vspace{-4mm}
\end{figure}

\begin{table}[t]
\caption{Average volumes in the parking example.}
\label{tab:parking}
\centering
\normalsize
\begin{tabular}{c c c}
\toprule
  & Zonotope & Constrained zonotope \\
%
\midrule
   No constraints      & 9.722 & - \\
 $\phi_p$ constraints   &  9.311 &7.042 \\
 $\phi_\theta$ constraints  &0.124& 0.076 \\
\bottomrule
\end{tabular}
\end{table}

In this section, we detail the application of our method to two examples. 
Readers can find an overview video of our experiments conducted at the \href{https://www.kth.se/is/dcs/research/control-of-transport/smart-mobility-lab/smart-mobility-lab-1.441539}{Smart Mobility Lab} at \href{https://bit.ly/DataReachSTL}{[https://bit.ly/DataReachSTL]}. 

 For our experimental platform, we represent a vehicle $V$ with a SVEA vehicle~\cite{Jiang2020a}. We use historical data sets of length $1000$ points gathered from the same car from other driving scenarios than the presented ones. We perform a single-step reachability analysis for each example, and we manually operate the car such that its behavior satisfies the known side information. Measurements for both the historical data sets and our two examples are made using a motion capture system. The assumed 
 process noise zonotope is $\mathcal{Z}_w =\zono{0,\begin{bmatrix} 0.9 & 0.9\end{bmatrix}^T}$ and measurement noise zonotope of value $\zono{0,\begin{bmatrix} 0.01 & 0.01\end{bmatrix}^T}$. 
 For both examples, let $V$ and its environment be defined over $\mathbb{R}^2$. In other words, $V$'s state $x\in\mathbb{R}^2$ is written as $x=[x_1 \ \, x_2]^\top$, where $x_1$ and $x_2$ are the x and y positions of $V$. Now, in the following sections, we will introduce our two scenarios for $V$ and present the results for each case. 

\begin{figure}[t]
    \centering
    \includegraphics[width=0.27\textwidth]{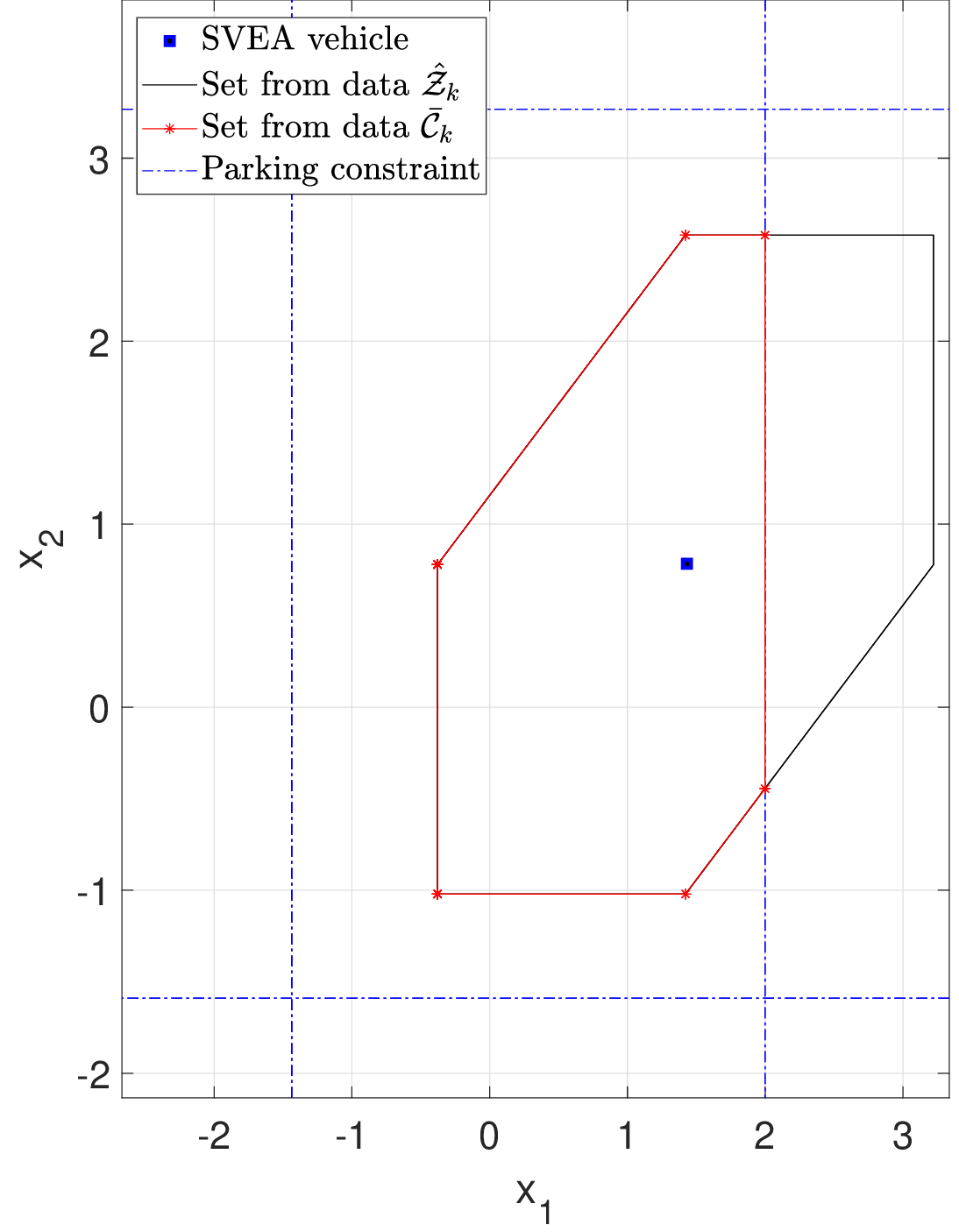}
    \caption{Snapshot showing the result of constraining a reachable set from the parking example.}
    \label{fig:snapshot}
    \vspace{-4mm}
\end{figure}

\begin{figure*}[!htbp]
\vspace{-0.05cm}
    \centering
    \begin{tabular}{ p{0.330\textwidth}  p{0.330\textwidth} p{0.330\textwidth}}
    \resizebox{0.32\textwidth}{!}{
            \begin{subfigure}[h]{0.34\textwidth}
     \centering
        \includegraphics[scale=0.22]{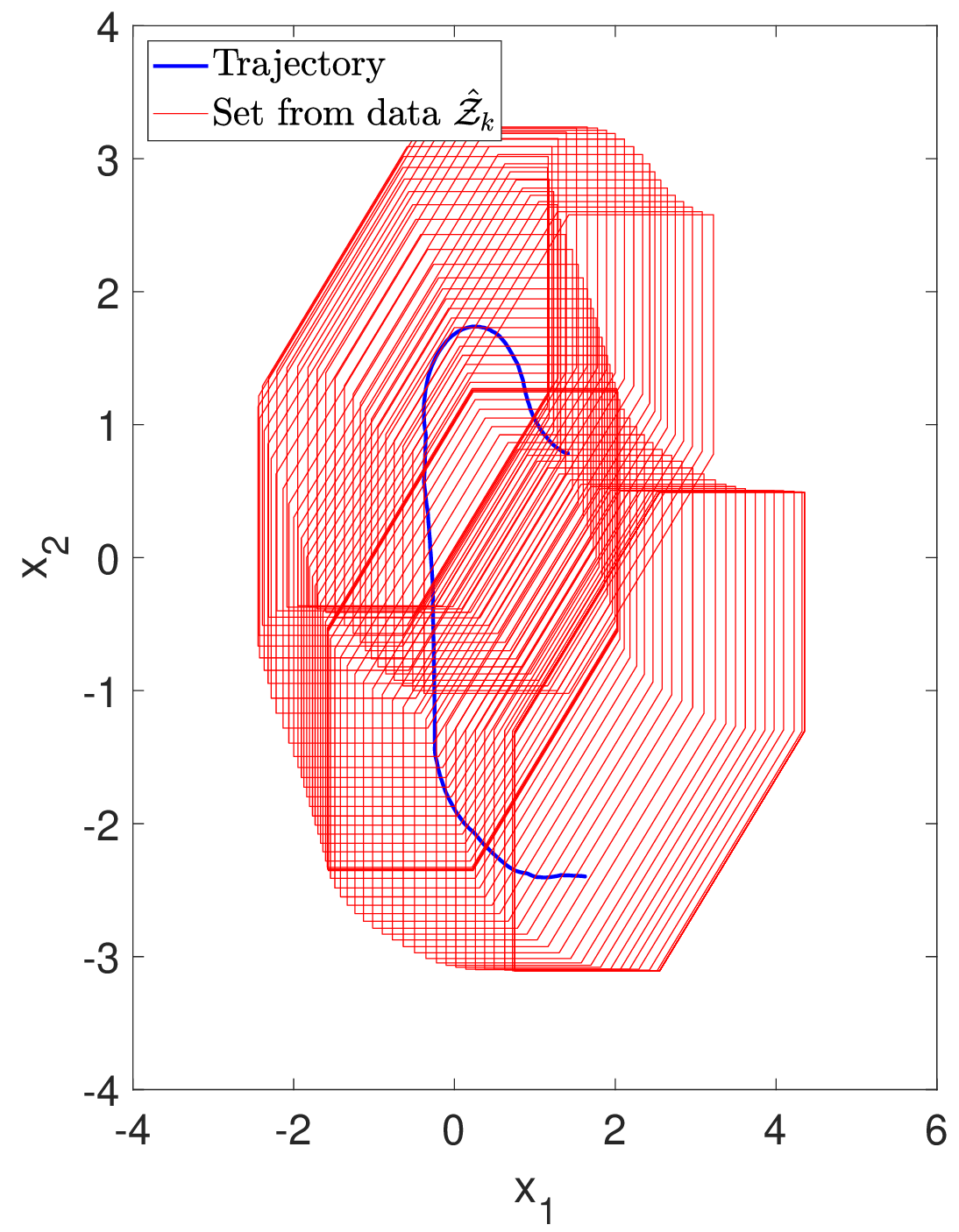}
        \caption{}
        \label{fig:data_driven_zono}
    \end{subfigure}
      } 
  &
  \resizebox{0.32\textwidth}{!}{
            \begin{subfigure}[h]{0.34\textwidth}
     \centering
        \includegraphics[scale=0.22]{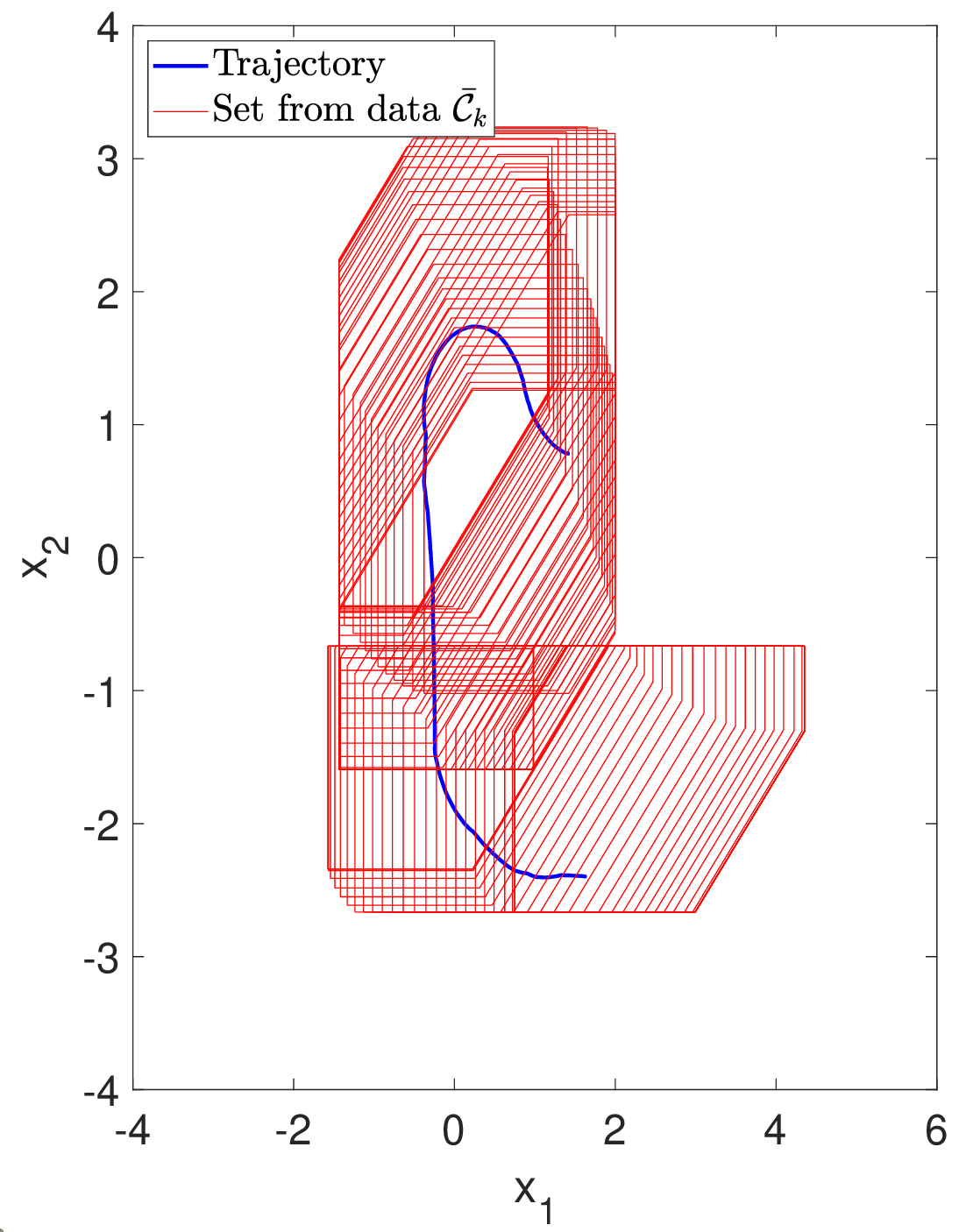}
        \caption{}
        \label{fig:parkingconst_cmz}
    \end{subfigure}
     }
        &
  \resizebox{0.32\textwidth}{!}{
            \begin{subfigure}[h]{0.34\textwidth}
     \centering
        \includegraphics[scale=0.22]{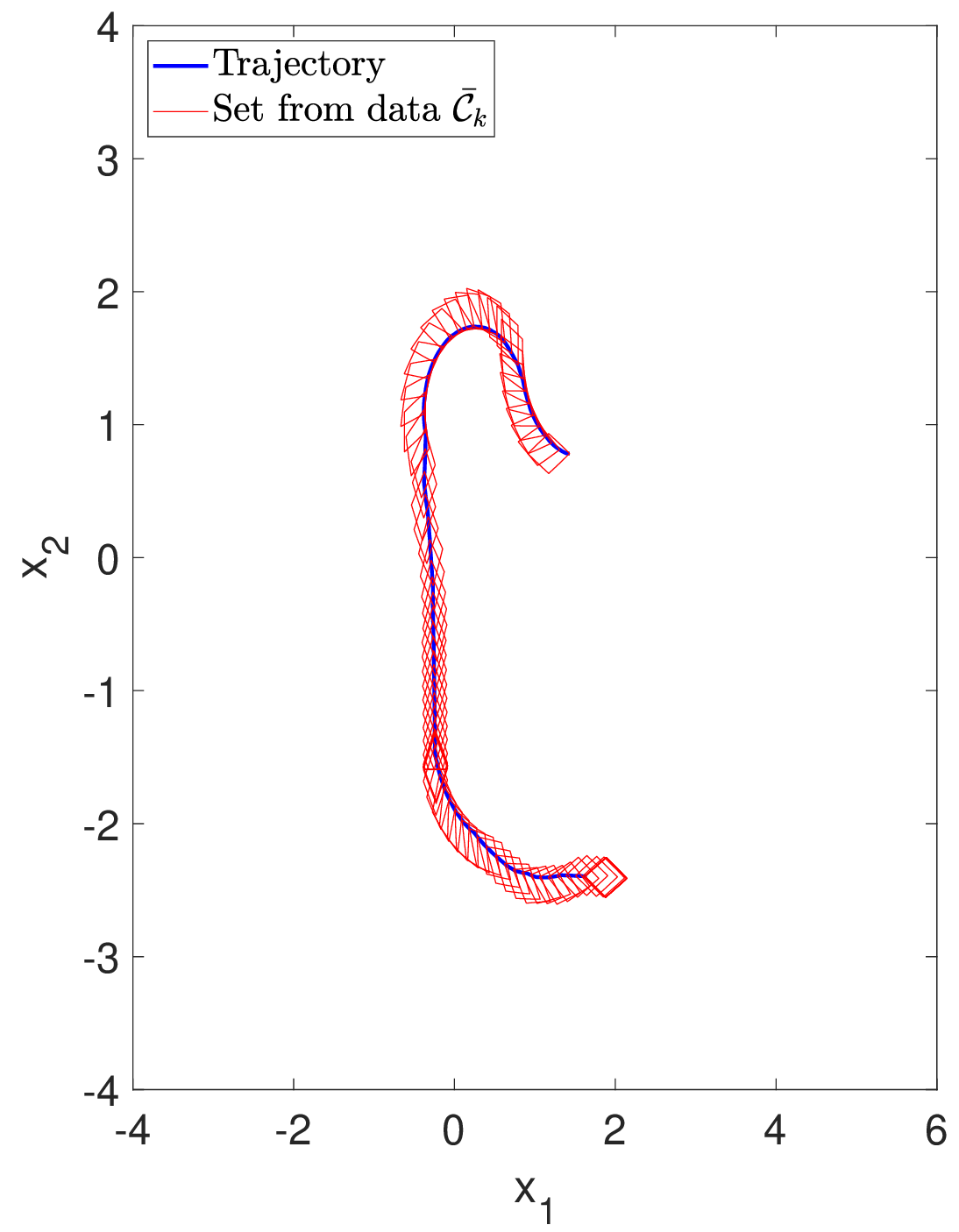}
        \caption{}
        \label{fig:theta_cmz_scalled}
    \end{subfigure}
     }
 \end{tabular}
\caption{The reachable sets without constraint in (a), with $\phi_p$ constraints in (b) and $\phi_\theta$ constraint in (c) for the parking lot example.}
    \label{fig:prakingreach}
    \vspace{-4mm}
\end{figure*}

\begin{figure*}[!htbp]
\vspace{-0.05cm}
    \centering
    \begin{tabular}{ p{0.330\textwidth}  p{0.330\textwidth} p{0.330\textwidth}}
    \resizebox{0.32\textwidth}{!}{
            \begin{subfigure}[h]{0.34\textwidth}
     \centering
        \includegraphics[scale=0.22]{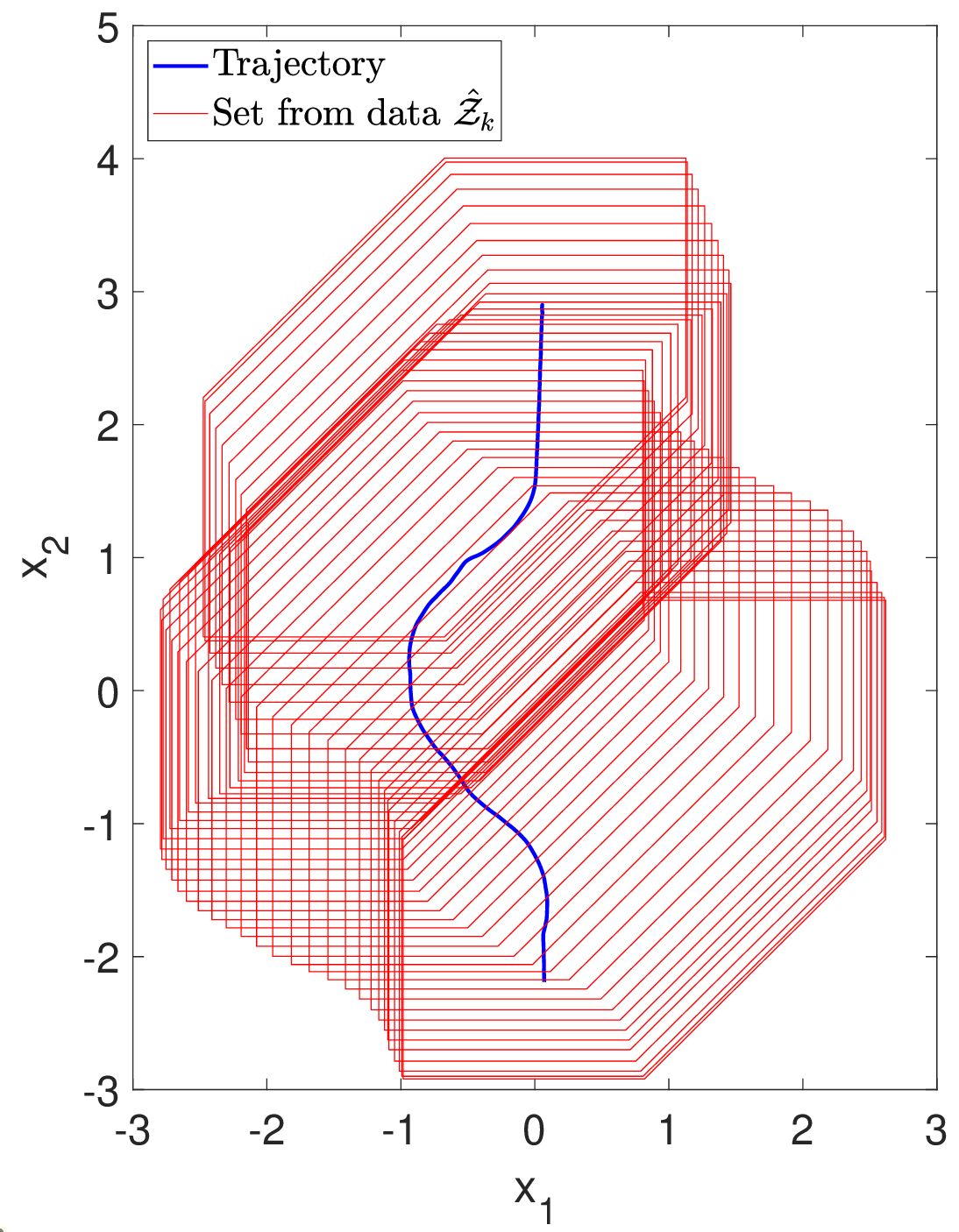}
        \caption{}
        \label{fig:rounddatadriven}
    \end{subfigure}
      } 
  &
  \resizebox{0.32\textwidth}{!}{
            \begin{subfigure}[h]{0.34\textwidth}
     \centering
        \includegraphics[scale=0.22]{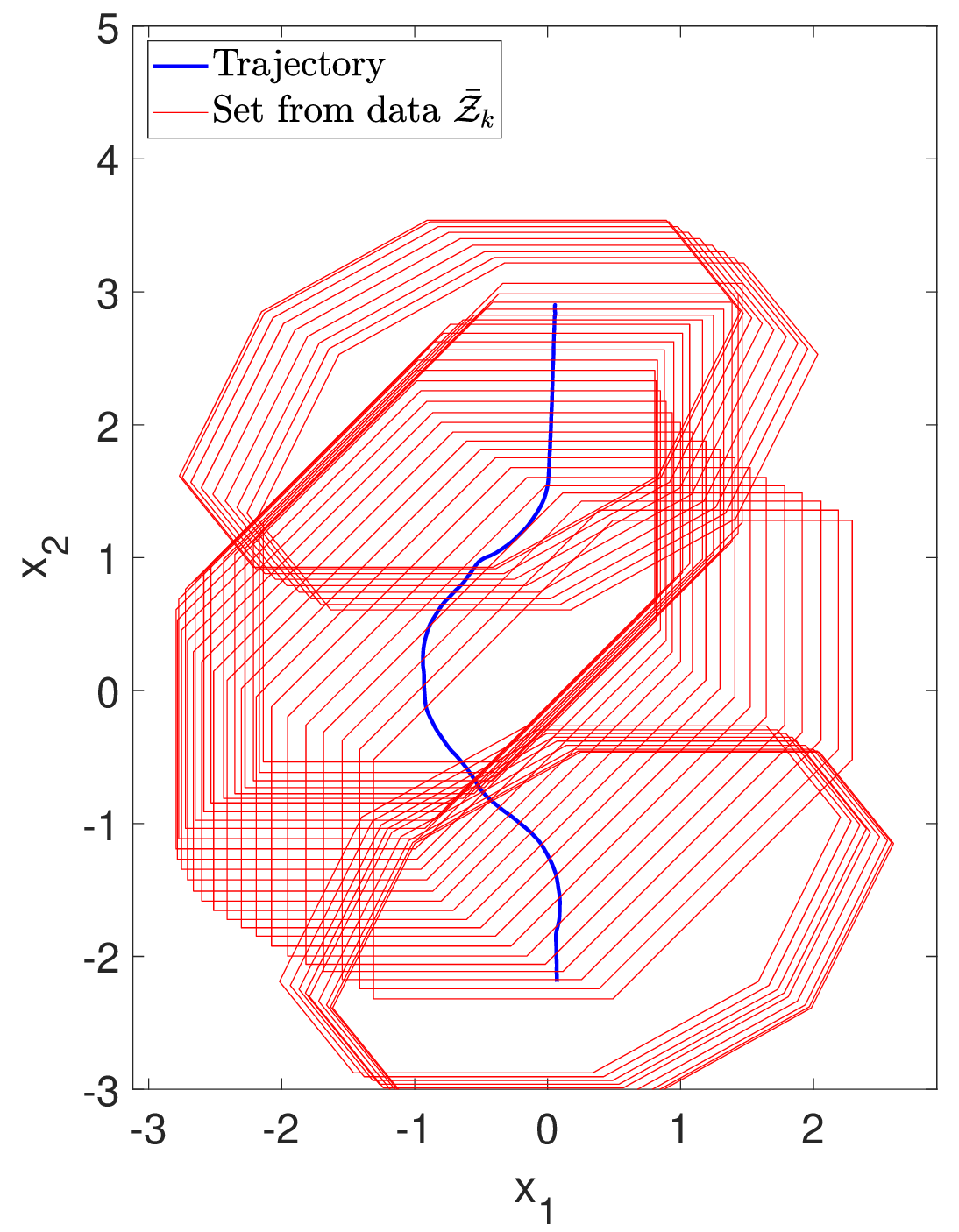}
        \caption{}
        \label{fig:roundzono}
    \end{subfigure}
     }
        &
  \resizebox{0.32\textwidth}{!}{
            \begin{subfigure}[h]{0.34\textwidth}
     \centering
        \includegraphics[scale=0.22]{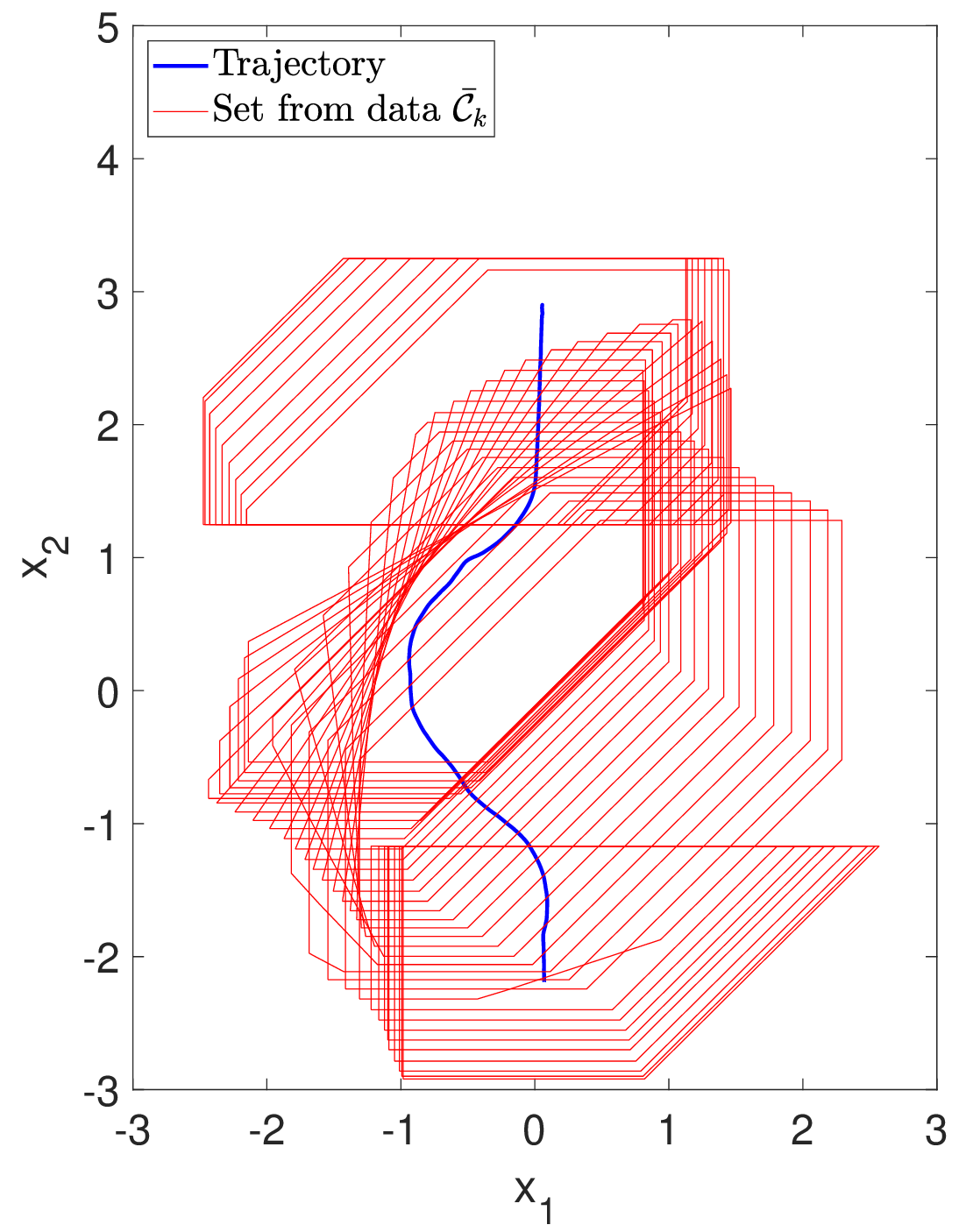}
        \caption{}
        \label{fig:roundcmz}
    \end{subfigure}
     }
 \end{tabular}
\caption{The reachable sets without constraint in (a), with $\phi_r$ constraints using zonotope and  constrained zonotope in (b) and (c), respectively, for the roundabout example.}
    \label{fig:noisycontreach}
    \vspace{-5mm}
\end{figure*}

\begin{table}[t]
\caption{Average volumes in the roundabout example.}
\label{tab:round}
\centering
\normalsize
\begin{tabular}{c c c}
\toprule
  & Zonotope & Constrained zonotope\\
%
\midrule
   No constraints     &  9.722 & - \\
 $\phi_r$ constraints   & 9.109  &5.956\\
\bottomrule
\end{tabular}
\end{table}

\subsection{Parking Lot Example}
 In this example, we consider side information that contains only linear spatial constraints. Suppose $V$ is parked in the parking lot and is scheduled to depart the parking lot soon. As denoted in Fig.~\ref{fig:parkround}, let the set of states corresponding to the parking region be $\mathcal{P}\subset\mathbb{R}^2$ and the set of states corresponding to the outside of the parking region (the street) be $\mathcal{O}\subset\mathbb{R}^2$. Note, the entrance and exit of the parking lot is considered both part of the parking region and the street. We know that $V$ is scheduled to leave the parking region within 25 seconds of the start of our scenario. Thus, we can write the following STL formula as the known side information about $V$:
%
 %
%
\begin{align}\label{eq:plot_constraint}
\phi_p::{=} G_{[0, 25]}(\mathcal{P}) \ {\wedge} \ F_{[0, 25]}(\mathcal{P}\wedge\mathcal{O}) \ {\wedge} \ G_{[25, 40]}(\mathcal{O}).
\end{align}
%
%
%
%
We can find the functions $\mathfrak{h}_1$ to $\mathfrak{h}_5$, which encode \eqref{eq:plot_constraint}:
\begin{align*}
    & \mathfrak{h}_1(x_1,x_2) =  1.7175 - |x_1 - 0.2805|, t\in[0,25],\\
    & \mathfrak{h}_2(x_1,x_2) =  2.429 - |x_2 - 0.839|, t\in[0,25],\\
   & \mathfrak{h}_3(x_1,x_2) =  1.3045-|x_1 + 0.3225|, t\in[24,25],\\
    & \mathfrak{h}_4(x_1,x_2) =  0.453-|x_2 + 1.137|, t\in[24,25],\\
    & \mathfrak{h}_5(x_1,x_2) =  1 -|x_2 + 1.665|, t\in[25,40],
\end{align*}
where $\mathfrak{h}_1$ and $\mathfrak{h}_2$ models our knowledge of $V$'s time within the region $\mathcal{P}$, $\mathfrak{h}_3$ and $\mathfrak{h}_4$ encodes $V$ eventually reaching the exit region $\mathcal{P}\wedge\mathcal{O}$ before $t=25$, and $\mathfrak{h}_5$ corresponds to our knowledge of when $V$ departs to $\mathcal{O}$. Fig. \ref{fig:snapshot} shows a snapshot of the data-driven reachable sets before and after being constrained by $\phi_p$ at $t=1$. We show the unconstrained, data-driven reachable sets in Fig.~\ref{fig:data_driven_zono} and the STL reachable sets constrained by $\phi_p$ in Fig.~\ref{fig:parkingconst_cmz}. 

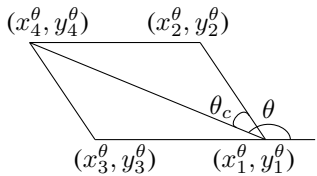
\begin{figure}[h!]
    \centering
\begin{tikzpicture}[x=0.75pt,y=0.75pt,yscale=-0.5,xscale=0.5]

\draw    (99,31.5) -- (271,31.5) ;
\draw    (271,31.5) -- (337,129.5) ;
\draw    (99,31.5) -- (165,129.5) ;
\draw    (99,31.5) -- (337,129.5) ;
\draw    (337,129.5) -- (387,129.5) ;
\draw    (320.2,122.2) .. controls (336.6,106.2) and (362,116.5) .. (362,129.5) ;
\draw    (308,117.25) .. controls (297,115) and (314,93.25) .. (320.5,105.5) ;
\draw    (165,129.5) -- (337,129.5) ;

\draw (356.4,98.9) node   [align=left] {\begin{minipage}[lt]{17.14pt}\setlength\topsep{0pt}
{\normalsize $\theta$}
\end{minipage}};
\draw (316.5,96.7) node   [align=left] {\begin{minipage}[lt]{27.06pt}\setlength\topsep{0pt}
{\normalsize $\theta_{c}$}
\end{minipage}};
\draw (327.2,148.85) node   [align=left] {\begin{minipage}[lt]{32.64pt}\setlength\topsep{0pt}
{\normalsize ($x_1^\theta,y_1^\theta$)}
\end{minipage}};
\draw (261,10.45) node   [align=left] {\begin{minipage}[lt]{32.64pt}\setlength\topsep{0pt}
{\normalsize ($x_2^\theta,y_2^\theta$)}
\end{minipage}};
\draw (119.4,10.45) node   [align=left] {\begin{minipage}[lt]{32.64pt}\setlength\topsep{0pt}
{\normalsize ($x_4^\theta,y_4^\theta$)}
\end{minipage}};
\draw (187.2,148.85) node   [align=left] {\begin{minipage}[lt]{32.64pt}\setlength\topsep{0pt}
{\normalsize ($x_3^\theta,y_3^\theta$)}
\end{minipage}};
\draw (146.6,79.4) node   [align=left] {\begin{minipage}[lt]{11.42pt}\setlength\topsep{0pt}
{\normalsize }
\end{minipage}};
\draw (244.6,119.45) node   [align=left] {\begin{minipage}[lt]{8.67pt}\setlength\topsep{0pt}
{\normalsize }
\end{minipage}};
\end{tikzpicture}
    \caption{Constrained region for $\mathcal{T}(x)$.}
    \label{rec}
    \vspace{-5mm}
\end{figure}

Then, suppose we know the upper limit of $V$'s capability to move forward and change heading between each sampling time. Let this set be denoted by $\mathcal{T}(x)$. Then, we can expand~\eqref{eq:plot_constraint} into the following STL formula as the known side information about $V$: $\phi_\theta::=    G_{[0, 40]}(\mathcal{T}(x)) \ \wedge \ G_{[0, 25]}(\mathcal{P}) \ \wedge \ F_{[0, 25]}(\mathcal{P}\wedge\mathcal{O}) \ \wedge \ G_{[25, 40]}(\mathcal{O})$. Now, we find the additional functions $\mathfrak{h}_6$, $\mathfrak{h}_7$, which encode the constraints corresponding to $G_{[0, 40]}(\mathcal{T}(x))$. Let $\theta$ be the heading angle and $\theta_c$ be the known, maximum heading angle change between each sampling time. We derive the constrained rectangular region $\mathcal{T}(x)$, shown in Fig.~\ref{rec}, with the following equations using the edges coordinates $x_{i}^\theta,y_{i}^\theta$, $i=1,\cdots,4$:
\begin{align*}
 &\mathfrak{h}_6(x_1,x_2)=0.5|c_2-c_3|-|-m_2x_1 + x_2 - 0.5(c_1+c_4)|,\\ 
 &\mathfrak{h}_7(x_1,x_2)=0.5|c_1-c_4|-|-m_1x_1 + x_2 - 0.5(c_2+c_3)|, 
\end{align*}
where $m_i=\frac{y_{i+1}^\theta-y_1^\theta}{x_{i+1}^\theta-x_1^\theta}, c_i=-m_ix_1^\theta+y_1^\theta$ for $i=1,2$, $c_3 = -m_2x_2^\theta +y_2^\theta$, and $c_4 = -m_1x_3^\theta +y_3^\theta$. Both $\mathfrak{h}_4$ and $\mathfrak{h}_5$ are defined for $t\in[0, 40]$. The reachable sets using $\phi_\theta$ as side information and constrained zonotope are shown in Fig. \ref{fig:theta_cmz_scalled}. The average volumes of the reachable sets are presented in Table \ref{tab:parking}.

\subsection{Roundabout Example}
We evaluate how the STL-based side information constrains the reachable sets when a nonlinear spatial constraint is included in the side-information. Suppose $V$ enters, drives around, and exits a roundabout intersection. For this example, we assume we have a rough prediction of when $V$ will enter and exit the roundabout. As illustrated in Fig.~\ref{fig:parkround}, let the region before the roundabout be $\mathcal{B}\subset\mathbb{R}^2$, the roundabout itself be $\mathcal{O}\subset\mathbb{R}^2$, and the region after the roundabout be $\mathcal{A}\subset\mathbb{R}^2$. We model the roundabout as a circle and we will use $\mathcal{O}$ to introduce nonlinearity into our side information. Finally, we know that $V$ will enter the roundabout within 4 seconds and will leave the roundabout within $10$ seconds of the start of the scenario. We formalize the side information  with the following STL formula:
%
%
%
\begin{equation}\label{eq:round_constraint}
\phi_r::= G_{[0, 4]}(\mathcal{B}) \ \wedge \ G_{[4, 10]}(\mathcal{O}) \wedge \ G_{[10, 14]} (\mathcal{A}).
\end{equation}
%
%
Accordingly, the functions $\mathfrak{h}_1,\cdots,\mathfrak{h}_3$ encode~\eqref{eq:round_constraint}:
\begin{align*}
    \mathfrak{h}_1(x_1,x_2) &{=}  1 - |x_2 - 2.25|, t\in[0,4],\\
    \mathfrak{h}_2(x_1,x_2) &{=} 1.429 {-} \|[x_1, x_2]^\top {-} [0.307, 0.044]^\top\|, t\in[4, 10], \\
    \mathfrak{h}_3(x_1,x_2) &{=} 1 - |x_2 + 2.169| , t\in[10, 14],
\end{align*}
where $h_1$, $h_2$ and $h_3$ models the satisfaction of the formulae corresponding to the regions $\mathcal{B}$, $\mathcal{O}$ and $\mathcal{A}$, respectively. We show the unconstrained, data-driven reachable sets in Fig.~\ref{fig:rounddatadriven}, the STL reachable sets constrained by  $\phi_r$ using zonotopes in Fig.~\ref{fig:roundzono}, and the STL reachable sets using constrained zonotopes in Fig.~\ref{fig:roundcmz}. The average volumes of the reachable sets are presented in Table \ref{tab:round}.

\vspace{-1mm}
\section{Conclusion}\label{sec:con}
\vspace{-1mm}
We have provided an approach  to achieve less conservative, data-driven reachable sets. We have shown that known, STL-based side information can be used to constrain reachable zonotopes post-analysis, while still maintaining safety guarantees on the resulting constrained zonotopes. 
In future work, we will evaluate our approach on more complex scenarios and potentially apply the work to multi-agent tasks. 

\balance
\bibliographystyle{ieeetr}
\bibliography{ref} 

\begin{thebibliography}{10}

\bibitem{conf:reach2000}
A.~B. Kurzhanski and P.~Varaiya, ``Ellipsoidal techniques for reachability
  analysis,'' in {\em International Workshop on Hybrid Systems: Computation and
  Control}, pp.~202--214, 2000.

\bibitem{conf:reach1971}
T.~Pecsvaradi and K.~S. Narendra, ``Reachable sets for linear dynamical
  systems,'' {\em Information and control}, vol.~19, no.~4, pp.~319--344, 1971.

\bibitem{conf:thesisalthoff}
M.~Althoff, {\em {Reachability analysis and its application to the safety
  assessment of autonomous cars}}.
\newblock PhD thesis, Technische Universit{\"{a}}t M{\"{u}}nchen, 2010.

\bibitem{Gillula2011}
J.~H. Gillula, G.~M. Hoffmann, H.~Huang, M.~P. Vitus, and C.~J. Tomlin,
  ``{Applications of hybrid reachability analysis to robotic aerial
  vehicles},'' {\em The International Journal of Robotics Research}, vol.~30,
  no.~3, pp.~335--354, 2011.

\bibitem{conf:rigorousreachtaylor}
M.~Berz and K.~Makino, ``{Rigorous reachability analysis and domain
  decomposition of Taylor models},'' in {\em International Workshop on
  Numerical Software Verification}, pp.~90--97, 2017.

\bibitem{conf:polydiscrete}
S.~V. Rakovic, E.~C. Kerrigan, D.~Q. Mayne, and J.~Lygeros, ``{Reachability
  analysis of discrete-time systems with disturbances},'' {\em IEEE
  Transactions on Automatic Control}, vol.~51, no.~4, pp.~546--561, 2006.

\bibitem{conf:sparsepolyzono}
N.~Kochdumper and M.~Althoff, ``{Sparse polynomial zonotopes: A novel set
  representation for reachability analysis},'' {\em arXiv preprint
  arXiv:1901.01780}, 2019.

\bibitem{conf:reachsim3}
A.~Donz{\'{e}} and O.~Maler, ``{Systematic simulation using sensitivity
  analysis},'' in {\em International Workshop on Hybrid Systems: Computation
  and Control}, pp.~174--189, 2007.

\bibitem{Lew2020}
T.~Lew and M.~Pavone, ``{Sampling-based Reachability Analysis: A Random Set
  Theory Approach with Adversarial Sampling},'' {\em arXiv preprint
  arXiv:2008.10180}, 2020.

\bibitem{conf:reachsim2}
A.~A. Julius, G.~E. Fainekos, M.~Anand, I.~Lee, and G.~J. Pappas, ``{Robust
  test generation and coverage for hybrid systems},'' in {\em International
  Workshop on Hybrid Systems: Computation and Control}, pp.~329--342, 2007.

\bibitem{conf:simreachmatrix}
J.~Maidens and M.~Arcak, ``{Reachability analysis of nonlinear systems using
  matrix measures},'' {\em IEEE Transactions on Automatic Control}, vol.~60,
  no.~1, pp.~265--270, 2014.

\bibitem{Devonport2020}
A.~Devonport and M.~Arcak, ``{Data-driven reachable set computation using
  adaptive gaussian process classification and monte carlo methods},'' in {\em
  American Control Conference}, pp.~2629--2634, IEEE, 2020.

\bibitem{conf:onthefly}
F.~Djeumou, A.~P. Vinod, E.~Goubault, S.~Putot, and U.~Topcu, ``{On-The-Fly
  Control of Unknown Smooth Systems from Limited Data},'' {\em arXiv preprint
  arXiv:2009.12733}, 2020.

\bibitem{conf:murat_christoffel}
A.~Devonport, F.~Yang, L.~E. Ghaoui, and M.~Arcak, ``{Data-Driven Reachability
  Analysis with Christoffel Functions},'' {\em arXiv preprint
  arXiv:2104.13902}, 2021.

\bibitem{conf:activelearning1}
A.~Chakrabarty, A.~Raghunathan, S.~{Di Cairano}, and C.~Danielson,
  ``{Data-driven estimation of backward reachable and invariant sets for
  unmodeled systems via active learning},'' in {\em 2018 IEEE Conference on
  Decision and Control}, pp.~372--377, IEEE, 2018.

\bibitem{conf:activelearning2}
A.~Chakrabarty, C.~Danielson, S.~{Di Cairano}, and A.~Raghunathan, ``{Active
  Learning for Estimating Reachable Sets for Systems With Unknown Dynamics},''
  {\em IEEE Transactions on Cybernetics}, pp.~1--12, 2020.

\bibitem{Allen2014}
R.~E. Allen, A.~A. Clark, J.~A. Starek, and M.~Pavone, ``{A machine learning
  approach for real-time reachability analysis},'' in {\em IEEE/RSJ
  International Conference on Intelligent Robots and Systems}, pp.~2202--2208,
  2014.

\bibitem{conf:uncertain}
A.~R.~R. Matavalam, U.~Vaidya, and V.~Ajjarapu, ``Data-driven approach for
  uncertainty propagation and reachability analysis in dynamical systems,'' in
  {\em 2020 American Control Conference (ACC)}, pp.~3393--3398, IEEE, 2020.

\bibitem{conf:stochasticreach}
A.~J. Thorpe, K.~R. Ortiz, and M.~M. Oishi, ``Data-driven stochastic
  reachability using hilbert space embeddings,'' {\em arXiv preprint
  arXiv:2010.08036}, 2020.

\bibitem{conf:koopmanblack}
S.~Bak, S.~Bogomolov, P.~S. Duggirala, A.~R. Gerlach, and K.~Potomkin,
  ``Reachability of black-box nonlinear systems after koopman operator
  linearization,'' {\em arXiv preprint arXiv:2105.00886}, 2021.

\bibitem{conf:ourL4dc}
A.~Alanwar, A.~Koch, F.~Allg{\"{o}}wer, and K.~H. Johansson, ``{Data-Driven
  Reachability Analysis Using Matrix Zonotopes},'' in {\em Proceedings of the
  3rd Conference on Learning for Dynamics and Control}, vol.~144, pp.~163--175,
  2021.

\bibitem{conf:ourjournal}
A.~Alanwar, A.~Koch, F.~Allg{\"{o}}wer, and K.~H. Johansson, ``{Data-Driven
  Reachability Analysis from Noisy Data},'' {\em arXiv preprint
  arXiv:2105.07229}, 2021.

\bibitem{fainekos2009robustness}
G.~E. Fainekos and G.~J. Pappas, ``Robustness of temporal logic specifications
  for continuous-time signals,'' {\em Theoretical Computer Science}, vol.~410,
  no.~42, pp.~4262--4291, 2009.

\bibitem{lindemann2020barrier}
L.~Lindemann and D.~V. Dimarogonas, ``{Barrier Function-based Collaborative
  Control of Multiple Robots under Signal Temporal Logic Tasks},'' {\em IEEE
  Trans. Control Network Syst.}, vol.~7, no.~4, pp.~1916--1928, 2020.

\bibitem{Jiang2020a}
F.~J. Jiang, Y.~Gao, L.~Xie, and K.~H. Johansson, ``{Ensuring safety for
  vehicle parking tasks using Hamilton-Jacobi reachability analysis},'' in {\em
  2020 59th IEEE Conference on Decision and Control (CDC)}, pp.~1416--1421,
  2020.

\bibitem{conf:zono1998}
W.~K{\"{u}}hn, ``{Rigorously computed orbits of dynamical systems without the
  wrapping effect},'' {\em Computing}, vol.~61, no.~1, pp.~47--67, 1998.

\bibitem{Girard2005}
A.~Girard, ``{Reachability of uncertain linear systems using zonotopes},'' in
  {\em Hybrid Systems: Computation and Control}, LNCS 3414, pp.~291--305, 2005.

\bibitem{conf:diff-set}
A.~Alanwar, J.~J. Rath, H.~Said, K.~H. Johansson, and M.~Althoff,
  ``{Distributed Set-Based Observers Using Diffusion Strategy},'' {\em arXiv
  preprint arXiv:2003.10347}, 2020.

\bibitem{conf:const_zono}
J.~K. Scott, D.~M. Raimondo, G.~R. Marseglia, and R.~D. Braatz, ``{Constrained
  zonotopes: A new tool for set-based estimation and fault detection},'' in
  {\em Automatica}, vol.~69, pp.~126--136, 2016.

\bibitem{conf:stripzono}
V.~T.~H. Le, C.~Stoica, T.~Alamo, E.~F. Camacho, and D.~Dumur,
  ``{Zonotope-based set-membership estimation for multi-output uncertain
  systems},'' in {\em IEEE International Symposium on Intelligent Control},
  pp.~212--217, 2013.

\bibitem{maler2004monitoring}
O.~Maler and D.~Nickovic, ``{Monitoring temporal properties of continuous
  signals},'' in {\em Formal Techniques, Modelling and Analysis of Timed and
  Fault-Tolerant Systems}, pp.~152--166, Springer, 2004.

\bibitem{conf:montenbruckLipschitz}
J.~M. Montenbruck and F.~Allg{\"{o}}wer, ``{Some problems arising in controller
  design from big data via input-output methods},'' in {\em 55th Conference on
  Decision and Control}, pp.~6525--6530, IEEE, 2016.

\bibitem{conf:novaraLipschitz}
C.~Novara, L.~Fagiano, and M.~Milanese, ``{Direct feedback control design for
  nonlinear systems},'' {\em Automatica}, vol.~49, no.~4, pp.~849--860, 2013.

\end{thebibliography}

\end{document}